\newcommand{\rl}{r^{\mathrm{l}}}
\newcommand{\ru}{r^{\mathrm{u}}}
\newcommand{\compilehidecomments}{false}
	\newcommand{\jz}[1]{}
	\newcommand{\mo}[1]{}
    \newcommand{\rev}[1]{}
    \newcommand{\adam}[1]{}
\newcommand{\mo}[1]{{\color{blue} [\text{MH:} #1]}}
\newcommand{\rev}[1]{{\color{red}#1}}
\newcommand{\adam}[1]{{\color{blue} [\text{AW:} #1]}}
\theoremstyle{plain}
\newtheorem{theorem}{Theorem}[section]
\newtheorem{lemma}[theorem]{Lemma}
\newtheorem{corollary}[theorem]{Corollary}
\theoremstyle{plain}
\newtheorem{definition}[theorem]{Definition}
\newtheorem{assumption}[theorem]{Assumption}
\crefname{assumption}{assumption}{assumptions}
\Crefname{assumption}{Assumption}{Assumptions}
\theoremstyle{remark}
\newtheorem{remark}[theorem]{Remark}
\title{Practical Adversarial Attacks on Stochastic Bandits via Fake Data Injection}
\author{
Qirun Zeng \\
University of Science and Technology of China, Hefei, Anhui, China \\
\And
Eric He \\
California Institute of Technology, Pasadena, CA, USA \\
\And
Richard Hoffmann \\
California Institute of Technology, Pasadena, CA, USA \\
\And
Xuchuang Wang \\
University of Massachusetts Amherst, Amherst, MA, USA \\
\And
Jinhang Zuo\thanks{Corresponding author. Email: jinhang.zuo@cityu.edu.hk} \\
City University of Hong Kong, Hong Kong SAR, China
}
\begin{document}

\maketitle

\begin{abstract}
Adversarial attacks on stochastic bandits have traditionally relied on some unrealistic assumptions, such as per-round reward manipulation and unbounded perturbations, limiting their relevance to real-world systems.
We propose a more practical threat model, Fake Data Injection, which reflects realistic adversarial constraints: the attacker can inject only a limited number of bounded fake feedback samples into the learner's history, simulating legitimate interactions.
We design effective attack strategies under this model, explicitly addressing both magnitude constraints (on reward values) and temporal constraints (on when and how often data can be injected).
Our theoretical analysis shows that these attacks can mislead a class of bandit algorithms into selecting a target arm in nearly all rounds while incurring only sublinear attack cost. Experiments on synthetic and real-world datasets validate the effectiveness of our strategies, revealing vulnerabilities in stochastic bandit algorithms under practical adversarial scenarios.
\end{abstract}

\section{Introduction}\label{sec:intro}

Multi-armed bandit (MAB) algorithms are widely used in online decision-making systems for their ability to balance exploration and exploitation using partial feedback. They form the backbone of many interactive applications, including personalized recommendation~\citep{li2010contextual}, online advertising~\citep{chen2016combinatorial}, clinical trials~\citep{villar2015multi}, and adaptive routing~\citep{li2016contextual}. As these algorithms are increasingly deployed in high-stakes, user-facing systems, growing concerns have emerged regarding their vulnerability to adversarial manipulation.
A growing body of work~\citep{jun2018adversarial, liu2019data, zuo2023adversarial, zuo2024near} has shown that MAB algorithms are vulnerable to adversarial attacks, where an attacker subtly perturbs the observed rewards to mislead the learning process. Remarkably, even with limited intervention, the attacker can steer the learner toward selecting a targeted but suboptimal arm in the vast majority of rounds.

However, prior works on adversarial attacks against stochastic bandits~\citep{jun2018adversarial, liu2019data, zuo2024near} almost exclusively adopt a \emph{feedback-perturbation} threat model, where the attacker observes the learner's chosen arm and the corresponding reward in each round, then arbitrarily modifies that reward before it is revealed to the learner. This assumption effectively grants the attacker \emph{full control of the feedback channel at every timestep}, a level of adversarial power that is unrealistic in real-world systems. For example, in a restaurant recommendation platform, no adversary can continuously overwrite genuine user ratings submitted by thousands of real customers. At most, they can register fake accounts and inject fabricated reviews. Likewise, in online advertising or click-through prediction, practical attacks take the form of click fraud or synthetic interactions, which add fake feedback rather than altering genuine user data. In other words, prior models \emph{overestimate} the attacker's per-round control, while simultaneously \emph{underestimating} their ability to inject feedback across arbitrary arms.

These mismatches motivate our shift to a more realistic and constrained threat model, \emph{Fake Data Injection}. Instead of modifying genuine feedback, the attacker influences the learner indirectly by injecting a limited number of fabricated (arm, reward) pairs into its interaction history. These fake samples must conform to valid feedback ranges (e.g., binary clicks or 1--5 star ratings), and their injection is subject to constraints such as system-level detection or resource limits. The learner processes these fake interactions indistinguishably from real ones, updating estimates, counts, and decision logic accordingly.

This new threat model captures practical attack surfaces overlooked by previous works. It removes the unrealistic assumption of per-round reward manipulation, enables feedback injection on arbitrary arms, and respects the bounded nature of real-world feedback. At the same time, it introduces new algorithmic challenges that cannot be addressed by existing techniques. Unlike the standard model where the attacker perturbs rewards in real time, fake data injection raises fundamental questions about \emph{when}, \emph{how strongly}, and \emph{how frequently} to inject samples to effectively influence the learner. The attacker must decide: 
(i) how many fake samples are required to suppress the selections of a non-target arm,
(ii) how to achieve this using only bounded reward values, and
(iii) how to distribute injections over time when batch size or injection frequency is constrained.
These challenges call for new analytical tools and attack strategies that explicitly account for both \emph{magnitude} constraints (on reward values) and \emph{temporal} constraints (on when and how often data can be injected).

\subsection{Our Contributions}

We develop a new framework for adversarial attacks on stochastic bandits under realistic fake-data constraints. Our main contributions are summarized as follows:

\begin{itemize}

\item We introduce \emph{Fake Data Injection}, a new threat model in which the attacker cannot overwrite genuine feedback and must instead inject fabricated but valid $(\text{arm}, \text{reward})$ records. This model explicitly capures practical constraints on both the magnitude of injected rewards and the timing/frequency of injections, resulting in a more realistic attack setting.

\item We develop the \emph{suppression principle} for fake data injection attacks, showing that once a non-target arm becomes sufficiently suppressed, the learner’s own exploration mechanism prevents further exploration of that arm. For UCB, we formalize this through an exponential suppression lemma and develop the \emph{Simultaneous Injection (SI)} attack, which achieves successful attacks using only bounded fake injections with sublinear cost.

\item We further study temporally constrained attacks where the attacker can inject only limited batches over time. We propose \emph{Periodic Bounded Injection (PBI)}, which maintains suppression through periodic injections and achieves successful attacks with sublinear cost under both temporal and magnitude constraints.

\item We extend the suppression analysis beyond UCB through a general \emph{suppression-bounded} framework for a class of bandit algorithms whose exploration of persistently suppressed arms is bounded, including Thompson Sampling and $\epsilon$-greedy.

\item We validate our methods on both synthetic and real-world datasets, demonstrating that even sparse, bounded fake data can significantly bias bandit learners in practice.

\end{itemize}

Our work bridges theoretical attack models for stochastic bandits and practical data-driven attacks in real systems by formalizing fake data injection under bounded and limited injections. Due to space limitations, the appendix further provides algorithm-specific analyses for Thompson Sampling and $\epsilon$-greedy that do not rely on the general suppression framework.

\subsection{Related Work}
There is a growing body of work on adversarial attacks against bandit algorithms~\citep{jun2018adversarial, liu2019data, garcelon2020adversarial, ma2023adversarial, wang2022linear}. 
\citet{jun2018adversarial} showed that with sublinear cost, an attacker can steer UCB and $\epsilon$-greedy toward suboptimal arms. 
Building on this, \citet{liu2019data} extended the paradigm to unknown algorithms and contextual bandits, while \citet{garcelon2020adversarial} studied perturbations to both contexts and rewards in linear bandits. 
More recently, \citet{zuo2024near} introduced optimal attack strategies against UCB and Thompson Sampling, establishing matching lower bounds. Most of these works assume a strong \emph{feedback-perturbation} model with direct reward modifications each round~\citep{jun2018adversarial, liu2019data, garcelon2020adversarial, zuo2024near}, or bounded variants that still allow continuous interference~\citep{NEURIPS2021_be315e7f, wangadversarial, zuo2023adversarial}.
Another line of research has focused on defense algorithms against adversarial attacks and corruptions, including robust MAB \citep{lykouris2018stochastic, gupta2019better,liu2019data}, robust linear bandits \citep{bogunovic2021stochastic, he2022nearly}, and combinatorial bandits \citep{xu2021simple}. However, the corruption-based threat models considered in these works are typically weaker than those studied in the attack literature, since the adversary is not allowed to observe the learner's actions before corrupting the feedback.
The appendix further provides detailed comparisons with prior attack models and algorithmic abstractions.

We differ from both lines by introducing the \emph{Fake Data Injection Threat Model}, where adversaries cannot tamper with genuine rewards but may inject fabricated (arm, reward) pairs. This setting reflects realistic constraints in recommendation or advertising platforms, where fake users can be created but authentic feedback remains immutable. Beyond bandits, it parallels data poisoning in broader ML systems (recommenders, ads, A/B testing, RL), and our tools (e.g., the Exponential Suppression Lemma) provide general insights into vulnerabilities of sequential decision-making.
 \section{Preliminaries}\label{sec:model}

\subsection{Stochastic Bandits}
We consider the standard stochastic multi-armed bandit setting with the arm set $[K] \triangleq \{1, 2, \dots, K\}$, where each arm $i \in [K]$ is associated with an unknown reward distribution with mean $\mu_i$. The reward distributions are assumed to be $\sigma^2$-sub-Gaussian, with $\sigma^2$ known. 
In each round $t \in [T]$, the learner selects an arm $a_t$ and receives a reward $r_t$ drawn from the distribution corresponding to arm $a_t$.

\textbf{Upper Confidence Bound (UCB)}. 
We consider the UCB algorithm as specified in \citet{jun2018adversarial,zuo2024near}, and the prototype is the $(\alpha, \psi)$ algorithm of~\citet[Section~2.2]{bubeck2012regretanalysisstochasticnonstochastic}. In the first $K$ rounds, the learner pulls each arm once to initialize reward estimates. For subsequent rounds $t > K$, the learner selects the arm with the highest UCB index $a_t = \arg\max_{i \in [K]} \Lambda_i (t) \triangleq \hat \mu_i(t) + 3\sigma \sqrt{\log t / N_i(t)}$, where $\hat{\mu}_i(t)$ is the empirical mean reward of arm $i$ and $N_i(t)$ is the number of times arm $i$ has been selected up to round $t$.

\subsection{Previous Threat Model and Limitations}

Prior work on adversarial robustness in stochastic bandits broadly falls into two categories: corruption-robust approaches (e.g., \citet{kapoor2019corruption,gupta2019better}) and adversarial attack models (e.g., \cite{jun2018adversarial,zuo2023adversarial,zuo2024near}). The former assumes a weaker, history-only adversary and focuses on designing defenses against such corruption, whereas the latter studies a stronger, adaptive adversary that is standard in adversarial-attack settings. Our paper fits into this adversarial-attack category. Therefore, we begin by reviewing the standard threat model adopted in prior works on adversarial attack models.

In each round $t$, the learner selects an arm $a_t$ to play, and the environment generates a pre-attack reward $r^0_t$ drawn from the underlying distribution of arm $a_t$. The attacker then observes the tuple $(a_t, r^0_t)$ and decides an attack value $\alpha_t$. The learner receives only the post-attack reward $r_t = r^0_t - \alpha_t$. Define the cumulative attack cost as $C(T) = \sum_{t=1}^T |\alpha_t|$. The attacker's objective is to manipulate the learner into selecting a specific target arm for a linear number of rounds while incurring only sublinear attack cost. Formally, an attack is successful if it induces the learner to select the target arm in $T-o(T)$ rounds while incurring sublinear corruption cost $C(T)=o(T)$, which in turn implies that the number of injected samples is $o(T)$.
While many attack strategies have been proposed under the standard threat model, it exhibits several critical limitations when applied to practical settings.

\emph{First}, the model assumes that the attacker can perturb the environment-generated reward in \textit{every} round. This assumption is often unrealistic in real-world applications such as recommender systems. For example, consider an app that recommends restaurants to users and collects their feedback to improve future recommendations. An attacker may wish to bias the system toward recommending a particular restaurant, but it is infeasible to directly modify the feedback of all real users. In practice, a more common attack strategy is to create fake users who submit fabricated feedback. However, even this is constrained by operational or detection limits: adding fake users in every round is highly impractical. Thus, the assumption of per-round attack capability does not reflect realistic adversarial power.
\emph{Second}, the model restricts the attacker to modifying only the reward of the chosen arm in each round. In contrast, fake-user-based attacks offer more flexibility. A fake user can submit feedback on any item (i.e., any arm), regardless of what the learner selected in that round. This means fake data injection enables the attacker to fabricate feedback for arbitrary arms, not just the one currently played by the learner. As a result, the standard model underestimates the attacker's flexibility in practice and overestimates their ability to act at every timestep.
\emph{Third}, the threat model assumes that both the pre-attack reward and the attack values are unbounded. In many systems, user feedback is naturally bounded (e.g., binary click signals or discrete rating scores). Allowing arbitrarily large attack values could result in out-of-range or clearly invalid feedback, which would either be filtered out by the system or easily flagged as suspicious. Therefore, attacks that rely on large reward perturbations are incompatible with these bounded-feedback environments.

\section{New Threat Model: Fake Data Injection}\label{sec:new_model}

To address practical limitations of the standard adversarial attack model, we introduce a new and more realistic threat model, which we call the \emph{Fake Data Injection Threat Model}. This model captures how adversaries behave in real-world systems such as recommendation platforms, where direct manipulation of genuine user feedback is infeasible, and attacks are often carried out by injecting fabricated interactions (e.g., fake users with fake feedback).

In the Fake Data Injection model, the attacker does \emph{not} interfere with the feedback received by the learner during normal interactions. Instead, the attacker may inject $N^F$ \emph{fake data samples},
denoted by $\{(a_s^F,r_s^F)\}_{s\in[N^F]}$, into the learner's history,
where $N^F$ depends on the attack strategy.
Each fake sample mimics a legitimate logged interaction, where $a_s^F\in[K]$ is the selected arm and $r_s^F\in[\rl,\ru]$ is the corresponding reward. The learner treats injected samples as valid records: upon receiving a fake sample $(a_s^F,r_s^F)$, it updates $\hat\mu_{a_s^F}$, increments $N_{a_s^F}$, and advances its internal round counter $t$. This reflects event-log-based learners, whose clock counts accepted feedback records rather than verified genuine user actions. In our attacks, fake samples are injected only on non-target arms, so counting them as internal rounds is conservative: they consume horizon without directly creating target-arm pulls. We define the total attack cost as
\begin{equation*}
    C^F(T) \triangleq {\sum}_{s=1}^{N^F} \left|r_s^F - \mu_{a_s^F}\right|,
\end{equation*}
and consider an attack \emph{successful} if it can mislead the learner into pulling a target arm for $T-o(T)$ rounds while ensuring $C^F(T)=o(T)$. Throughout the paper, the horizon $T$ denotes the learner's internal history length, i.e., the total number of updates processed by the learner. Hence injected fake samples also advance the learner's clock and count toward $T$. Under this convention, each fake sample occupies one internal round in the same way as a genuine interaction, although it is not generated by an actual learner action in the environment.

This hybrid mechanism distinguishes our model from both prior online perturbation models and offline poisoning settings. In offline data poisoning (e.g., \citet{liu2019data}), the attacker modifies a fixed dataset collected from arms actually played, and the learner is then trained once on this corrupted dataset. In contrast, our attacker can inject arbitrary $(i,r)$ pairs at any time, affecting arms that may never have been pulled. The learner immediately incorporates these fake samples and continues to make decisions in an online, round-by-round fashion. Thus, while our model permits batch-style updates, it remains fundamentally online, bridging the gap between classical reward-perturbation attacks and purely offline poisoning. It resolves several key limitations of the previous threat model:

\textbf{Limited access manipulation.} Unlike the standard model which assumes the attacker can modify the reward in every round, our model reflects the more plausible scenario where the attacker can only inject a limited number of fake interactions. For instance, in a restaurant recommendation app, an attacker cannot tamper with the feedback from real users but can register a finite number of fake accounts to submit biased reviews. It is unrealistic to assume the attacker can do this in every round without detection or resource exhaustion.

\textbf{Flexible feedback across arms.} The standard model restricts the attacker to modifying the reward for the arm chosen by the learner. In contrast, our model allows the attacker to fabricate data for any arm. This mirrors real-world attacks where fake users can submit reviews or feedback on arbitrary items, not just those recommended to them. For example, an attacker aiming to boost a target restaurant can flood the system with positive feedback for that restaurant, regardless of whether it was actually recommended in a specific round.

\textbf{Bounded and plausible feedback.} In our model, the fake rewards must lie within the valid feedback $[\rl, \ru]$, consistent with many practical systems that collect binary clicks or scaled ratings (e.g., 1 to 5 stars). This avoids the unrealistic assumption of unbounded reward modifications, where a single large perturbation could dominate the learner's behavior. Our bounded injection design ensures the fake data remains indistinguishable from legitimate interactions.

\section{Attack Strategies}\label{sec:alg}

The central difficulty in fake data injection is that the attacker cannot erase genuine feedback or assign arbitrary corrupted rewards; it can only add valid samples to the learner's history. We address this constraint through a \emph{suppression principle}: inject enough bounded fake feedback so that each non-target arm's empirical estimate falls below a stable lower benchmark for the target arm. Once this separation is certified, the learner's own exploration rule prevents the suppressed arm from being selected often, turning targeted manipulation into a problem of establishing and maintaining non-target suppression.
This perspective is the main technical driver of our attacks. For UCB, we show that suppression can persist for an exponentially long interval, which leads to Simultaneous Injection (SI) and Periodic Bounded Injection (PBI) attacks with sublinear cost under different injection constraints. We then abstract the same idea into a suppression-bounded framework that applies beyond UCB. Without loss of generality, we assume the target arm is arm $K$, which has the lowest expected reward.\footnote{This is the most challenging target for the attacker; the same arguments extend directly to any other target arm.} We first analyze UCB and then extend the suppression-based argument to a broader class of algorithms.

\subsection{The Suppression Lemma}
Before presenting our concrete attack strategies, we establish the suppression mechanism used throughout the section.

\begin{lemma}[Exponential suppression under UCB]
\label{lemma:exponential-suppression-ucb}
For the UCB algorithm specified in~\Cref{sec:model}, let $T > 2K$ and $\delta \in (0, 1/2)$, and define $\hat{\ell}_K(t) \triangleq \hat{\mu}_K(t) - 2\beta(N_K(t)) - 3\sigma \delta_0$ with $\beta(N) \triangleq \sqrt{\frac{2\sigma^2}{N}\log \frac{\pi^2 K N^2}{3\delta}}$.
If
\begin{equation}
\hat{\mu}_i(t) \le \hat{\ell}_K(t), \label{eq:suppress-num}
\end{equation}
then arm $i$ is not selected again until at least round $\exp\!\big(N_i(t)\delta_0^2\big)$, with probability at least $1 - \delta$.
\end{lemma}


\begin{remark}\label{remark:exponential-suppression}
    \Cref{lemma:exponential-suppression-ucb} establishes a critical property of our attack strategy: once a non-target arm $i$ has been pulled sufficiently and a properly chosen fake data injection is applied, its UCB index becomes \textit{exponentially suppressed}. More precisely, if the following two conditions are satisfied: (1) $N_i(t) \ge {\log T}/{\delta_0^2}$ and (2) $\hat{\mu}_i(t) \leq \hat{\ell}_K(t)$, then arm $i$ will not be selected again until after round $T$.
    
    This suppression effect is crucial: it guarantees that once the attack is applied to arm $i$, its influence on the learning process becomes negligible for the remaining rounds. The attacker can thus prevent further exploration of non-target arms using only a \emph{simultaneous (or sequential) injection} per arm, ensuring that the learner increasingly concentrates on the target arm. This mechanism forms the backbone of all our attack strategies.
\end{remark}

\subsection{Warm-up: Simultaneous Injection}
\label{sec:SI}

We begin our study of fake data injection attacks by considering a setting in which the attacker is nearly free of discovery and can inject fake data points as needed. We formalize this insight through the Simultaneous Injection Algorithm, a simple yet effective attack strategy against the UCB algorithm, as shown in \Cref{alg:SI-UCB}.

The attack operates as follows. For each non-target arm $i$, we wait until it has been pulled $N_i(t) = \lceil \log T / \delta_0^2\rceil$ times. At this point, we inject a batch of fake samples designed to reduce its empirical mean below a high-probability lower bound of the target arm. The injected samples for non-target arm $i$ ensure that after the attack, the empirical mean of arm $i$ satisfies
    $\hat{\mu}_i (t\!+\!1) \le \hat{\ell}_K (t)$,
thus making it unlikely to be selected in future rounds. 

We further require that the lower bound on bounded fake rewards is not too large,
formalized by the assumption below.
\begin{assumption}\label{assumption:bounded-reward-condition}
    $\rl \le \mu_K - 3\beta(1) - 3\sigma\delta_0$.
\end{assumption}

This assumption ensures that suppression of non-target arms remains feasible even when injected rewards are bounded from below. In particular, it guarantees that the attacker can reduce the empirical mean of any non-target arm below the lower confidence bound of the target arm, despite the limited manipulation strength. In contrast, most prior works (e.g., \citet{jun2018adversarial, liu2019data, zuo2024near}) implicitly assume unbounded attacks, under which rewards can be decreased arbitrarily and such a margin condition holds automatically.
In practice, \Cref{assumption:bounded-reward-condition} can be relaxed to the round-dependent condition 
    $\rl < \hat{\mu}_K(t) - 2\beta(N_K(t)) - \eta$,
for any fixed constant $\eta>0$, which only needs to hold at the specific round $t$ when the injection is carried out. Due to space limitations, we defer the detailed discussion to the appendix. For simplicity of exposition, we assume that~\Cref{assumption:bounded-reward-condition} holds throughout the subsequent analysis.

\begin{algorithm}[t]
\caption{Simultaneous Injection (SI) on UCB}
\label{alg:SI-UCB}

\KwIn{Attack parameter $\delta_0 > 0$}

\For{$t = 1, 2, \dots$}{
    \For{each non-target arm $i \in [K-1]$}{
        \If{$N_i(t) = \lceil \log T / \delta_0^2 \rceil$}{
            Inject $n_i = \big\lceil 
            \frac{\hat{\mu}_i(t) - \hat{\ell}_K(t)}{\hat{\ell}_K(t) - \rl} 
            \cdot \lceil \tfrac{\log T}{\delta_0^2} \rceil
            \big\rceil$ samples $(i, \rl)$\;
        }
    }
}
\end{algorithm}
We now provide a simplified guarantee for \Cref{alg:SI-UCB} (the full version is deferred to the appendix).
\begin{theorem}\label{thm:SI-UCB}
    Suppose $T > 2K$ and $\delta < 1/2$. With probability at least $1 - \delta$, \Cref{alg:SI-UCB} forces the UCB algorithm to select the target arm in at least $T - {o} (T)$ rounds, using a cumulative attack cost of at most $o(T)$.
\end{theorem}

Compared with the attack algorithm under the standard threat model in \cite{jun2018adversarial}, the SI Algorithm achieves a similar level of target-arm selection with comparable sublinear attack cost. Notably, the parameter $\delta_0$ controls the trade-off between the number of non-target arm pulls and the attack cost: increasing $\delta_0$ reduces the number of non-target pulls but increases the cost per injection. However, the marginal benefit diminishes once $\delta_0 > \sqrt{\log T}$, beyond which the cost grows without improving effectiveness.
By selecting $\delta_0 = \Theta(\sqrt{\log T})$, the cumulative attack cost is minimized to $\widehat{\mathcal{O}}(K\sigma \sqrt{\log T})$, which matches the lower bound $\Omega(\sqrt{\log T})$ established in \cite{zuo2024near}.

\subsection{Periodic Bounded Injection}

The Simultaneous Injection algorithm above assumes that the attacker can inject all required fake samples within a sequence of rounds. However, this assumption may not hold in practice. For example, in a restaurant recommendation system, injecting a large batch of fake (e.g., low-rating) reviews at once may trigger anomaly detection mechanisms, leading the system to filter or ignore the fake data. In contrast, injecting smaller amounts of fake feedback periodically at a controlled rate can be significantly less suspicious and more effective in practice.

To model this scenario, we introduce a more restrictive and realistic setting where:

\begin{enumerate}
    \item There exists a lower bound $\rl$ that the fake rewards cannot fall below;
    \item The attacker can inject at most $f$ fake samples in any single round (batch size constraint);
    \item Consecutive injections on the same arm $i$ must be separated by a cooldown period of at least $R_i$ rounds, where $R_i$ depends on the maximum batch size $f$.
\end{enumerate}

To address this setting, we propose the Periodic Bounded Injection (PBI) algorithm, shown in \Cref{alg:PBI-UCB}. Given a maximum batch size $f$, the algorithm adaptively schedules periodic injections to suppress the empirical mean of non-target arms while respecting both constraints. 

\begin{algorithm}[t]
\caption{Periodic Bounded Injection on UCB}
\label{alg:PBI-UCB}

\KwIn{Attack parameter $\delta_0$, reward lower bound $\rl$, max batch size $f$}
$n_i \gets 0$; $\tau_i \gets 0$ for all $i \in [K-1]$\;

\For{$t = 1, 2, \dots$}{
    \For{each non-target arm $i \in [K\!-\!1]$}{
        \If{$N_i(t) = \lceil {\log T}/{\delta_0^2} \rceil$}{
            $n_i \gets 
            \big\lceil 
            \frac{\hat{\mu}_i(t) - \hat{\ell}_K(t)}{\hat{\ell}_K(t) - \rl} 
            \cdot \lceil \tfrac{\log T}{\delta_0^2} \rceil
            \big\rceil$\;
            Calculate $R_i$ according to~\eqref{eq:sleep-gap}\;
            $\tau_i \gets t$\;
        }
        \If{$n_i > 0$ and $\tau_i \le t$}{
            Inject $\min(n_i, f)$ fake samples $(i, \rl)$\;
            $n_i \gets n_i - f$;
            $\tau_i \gets \tau_i + f + R_i$\;
        }
    }
}
\end{algorithm}

The PBI algorithm distributes the injection of fake samples across multiple rounds rather than injecting them all at once. Once a non-target arm $i$ reaches the designated pull threshold ($\lceil \log T / \delta_0^2 \rceil$), the attacker computes both the total number of fake samples $n_i$ required to suppress the empirical mean of arm $i$, and a waiting interval $R_i$, which ensures that the fake samples can be injected periodically without allowing arm $i$ to regain a high UCB index. The notation $\tilde \mu_i(t+f)$ represents the estimated value of $\hat \mu_i(t+f)$ with $f$ fake data injections starting from round $t$.
At each interval of $R_i + f$ rounds, a batch of $f$ fake samples is injected until the total $n_i$ is exhausted. This strategy effectively balances \textit{stealthiness} and \textit{attack efficacy}, making it robust against detection in practical systems with bounded feedback and rate-limited injection constraints.


\begin{lemma}
    \label{lemma:PBI-suppression}
    Let $t_i(c) = t + (f + R_i)c$ denote the round before the $(c\!+\!1)$-th injection. And let
    \begin{equation}
    R_i \triangleq \min_{1 \le c \le \lceil n_i/f \rceil} 
        \frac{1}{c} \left( \exp\!\left( Q_i(c)^2 \cdot (N_i(t) + fc) \right) - t\right) - f\label{eq:sleep-gap}
    \end{equation}
    where
    $Q_i(c) = {[\hat{\mu}_K(t) - 2\beta(N_K(t)) - \tilde{\mu}_i(t_i(c))]}/{3\sigma}$.
    The choice of $R_i$ ensures that once a batch of $f$ fake data samples is injected into non-target arm $i$, the arm will not be selected again for at least the next $R_i$ rounds.
\end{lemma}

\begin{proof}[Proof Sketch]
    The argument follows a one-shot variant of \Cref{lemma:exponential-suppression-ucb}. 
    It suffices to consider the first batch of $f$ injected samples, which induces the weakest suppression. 
    This batch creates a sufficient gap between the UCB index of arm $i$ and that of the target arm $K$. 
    By the definition of $R_i$, this gap persists for at least $R_i$ rounds, during which arm $i$'s UCB index remains below that of arm $K$. 
    Hence arm $i$ is not selected in this interval.
\end{proof}

Based on our experimental observations, setting $c=1$ generally leads to the smallest practical value of $R_i(c)$. Due to space constraints, detailed discussions of these aspects are provided in the appendix.

\begin{theorem}\label{thm:PBI-UCB}
    Suppose $T > 2K,\delta < 0.5$ and~\Cref{assumption:bounded-reward-condition} hold. Define $\Delta_i \triangleq \mu_i - \mu_K$. With probability at least $1 - \delta$, \cref{alg:PBI-UCB} forces the UCB algorithm to select the target arm in at least \[
        T - \mathcal{O}\left(\sum_{i=1}^{K-1} \frac{\mu_i + \beta (\log T / \delta_0^2) - \rl}{\mu_K - 3\beta(1) - 3\sigma\delta_0 - \rl} \cdot \frac{\log T}{\delta_0^2}\right)
    \]
    rounds, using a cumulative attack cost of at most \[
        \mathcal{O}\left(\sum_{i=1}^{K-1} \left(\mu_{i} - \rl\right)\frac{\Delta_i +4\beta(1) + 3\sigma\delta_0}{\mu_K - 3\beta(1) - 3\sigma\delta_0 - \rl} \cdot \frac{\log T}{\delta_0^2}\right).
    \]
\end{theorem}

\subsection{Extension to Suppression-Bounded Algorithms}
\label{sec:general-suppression-bounded}

We now extend the suppression principle beyond UCB to a broader class of bandit algorithms whose exploration of persistently suppressed arms remains bounded. Intuitively, once a suboptimal arm becomes empirically dominated by an optimal arm, the algorithm can pull that arm only a bounded number of additional times.

\begin{definition}[Suppression-bounded algorithms]
\label{def:suppression-bounded-algorithm}
Fix a horizon $T$, and let $i^\star$ denote an optimal arm. A bandit algorithm $\mathcal A$ is said to be \emph{suppression-bounded} if, for every suboptimal arm $i\neq i^\star$, there exists a deterministic bound $B_i(t,T,\delta)$ such that the following implication holds with probability at least $1-\delta$: if along a realized trajectory,
\[
    \hat\mu_i(s) < \hat\mu_{i^\star}(s), \qquad \forall s\in\{t+1,\ldots,T\},
\]
then
\[
    N_i(T)-N_i(t) \le B_i(t,T,\delta).
\]
We refer to $B_i(t,T,\delta)$ as the \emph{post-suppression pull bound} for arm $i$.
\end{definition}

This class covers many standard bandit algorithms, including UCB, Thompson Sampling, and $\epsilon$-greedy. Additional algorithm-specific analyses and extensions are provided in the appendix.

\begin{lemma}[The Suppression Lemma]
\label{lemma:general-suppression}
Fix a suboptimal arm $i\neq K$ and a round $t$. Define
\[
    U_i(t) \triangleq \hat \mu_i^0(t) + 2\beta(N_i^0(t)),
    \qquad
    L_{K}(t) \triangleq \hat \mu_{K}^0(t) - 2\beta(N_{K}^0(t)).
\]
Suppose that before round $t$, the attacker injects $n_i$ fake samples into arm $i$ with average value $a<L_{K}(t)$, and injects no further fake samples into arm $i$ afterward. If
\begin{equation}
\label{eq:general-suppression-condition}
    n_i >
    \frac{U_i(t)-L_{K}(t)}{L_{K}(t)-a}
    \left(N_i^0(t)+B_i(t,T,\delta/K)\right),
\end{equation}
then, with probability at least $1-\delta-\delta/K$, arm $i$ is selected at most $B_i(t,T,\delta/K)$ additional times after the injection and before round $T$. Consequently, after applying the condition to every non-target arm, all non-target arms are suppressed simultaneously with probability at least $1-2\delta$.
\end{lemma}

\begin{theorem}
\label{thm:general-attack-cost}
Suppose $\mathcal A$ is suppression-bounded. Fix a feasible fake reward value $a$. For each suboptimal arm $i\neq K$, let $t_i$ be the round at which the attacker first suppresses arm $i$ relative to arm $K$. Before round $t_i$, the attacker injects the minimum number of fake samples satisfying Eq.~\eqref{eq:general-suppression-condition}, each with reward value $a$. Then, for any $\delta\in(0,1/2)$, with probability at least $1-2\delta$, the learner selects arm $K$ in at least
\[
    T-\mathcal O \left(\sum_{i=1}^{K-1}
        \frac{\mu_i+3\beta(1)-a}{\mu_K-3\beta(1)-a}
        \left(N_i^0(t_i)+B_i\left(t_i,T,\delta/K\right)\right)
    \right)
\]
rounds, and the cumulative attack cost is at most
\[
    \mathcal O \left(\sum_{i=1}^{K-1}(\mu_i-a)
        \frac{\Delta_i+6\beta(1)}{\mu_K-3\beta(1)-a}
        \left(N_i^0(t_i)+B_i\left(t_i,T,\delta/K\right)\right)
    \right).
\]
\end{theorem}

\begin{remark}
    \Cref{lemma:general-suppression} and \Cref{thm:general-attack-cost} show that any algorithm in which each suboptimal arm is pulled only $o(T)$ times can be suppressed at an attack cost of only $o(T)$, provided that fake samples are injected at a suitable time. This is consistent with~\citet[Fact 1]{zuo2024near}.
    The guarantee is broadly applicable, but it is generally not as sharp as the algorithm-specific analyses for UCB or Thompson Sampling. For example, in the UCB case, the attacker can inject fake samples so that, for each non-target arm $i$,
        $\hat\mu_i(t) \le \hat\ell_K(t) < L_K(t)$,
    which eliminates post-suppression pulls and effectively yields $B_i(t,T,\delta)=0$. This highlights the trade-off between generality and sharpness: the abstract theorem applies broadly once a valid post-suppression pull bound is available, whereas algorithm-specific arguments can exploit the precise decision rule to obtain stronger guarantees.
    Due to space limitations, we defer detailed discussion to the appendix.

    The theorem applies directly to the SI algorithm. For the PBI algorithm, one can adapt the argument by invoking the post-suppression pull bound $B_i(t,T,\delta)$ separately on each suppression period $R_i$. Aggregating these period-wise bounds yields the corresponding guarantee for PBI.
\end{remark}

\section{Experiments}\label{sec:exp}
\begin{figure*}[tb]
    \centering
    \subfloat[Attack Costs vs $\delta_0$]{
        \includegraphics[width=0.31\columnwidth]{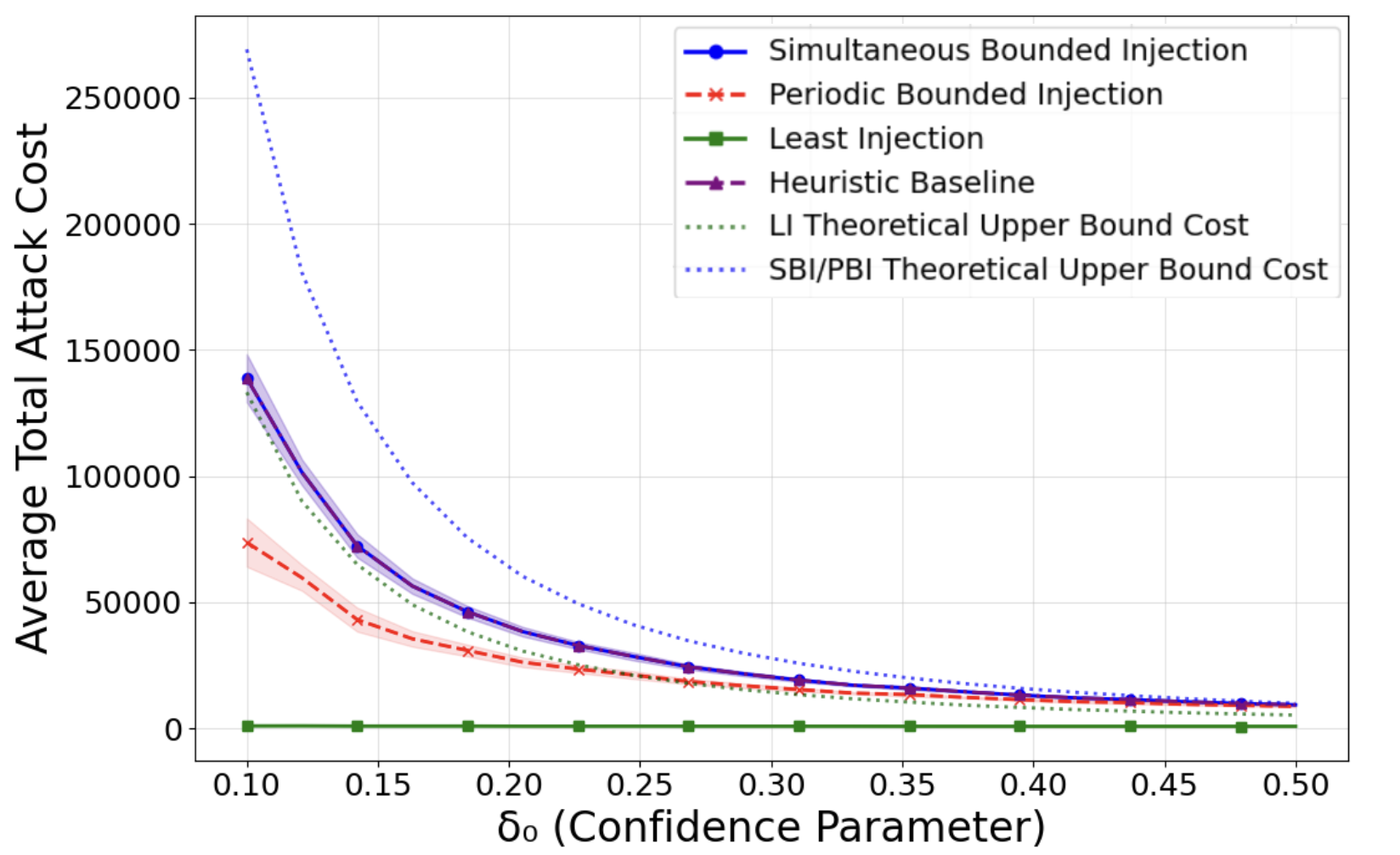}
        \label{fig:1a}
    }
    \hfill
    \subfloat[Attack Costs vs $T$]
    {
        \includegraphics[width=0.31\columnwidth]{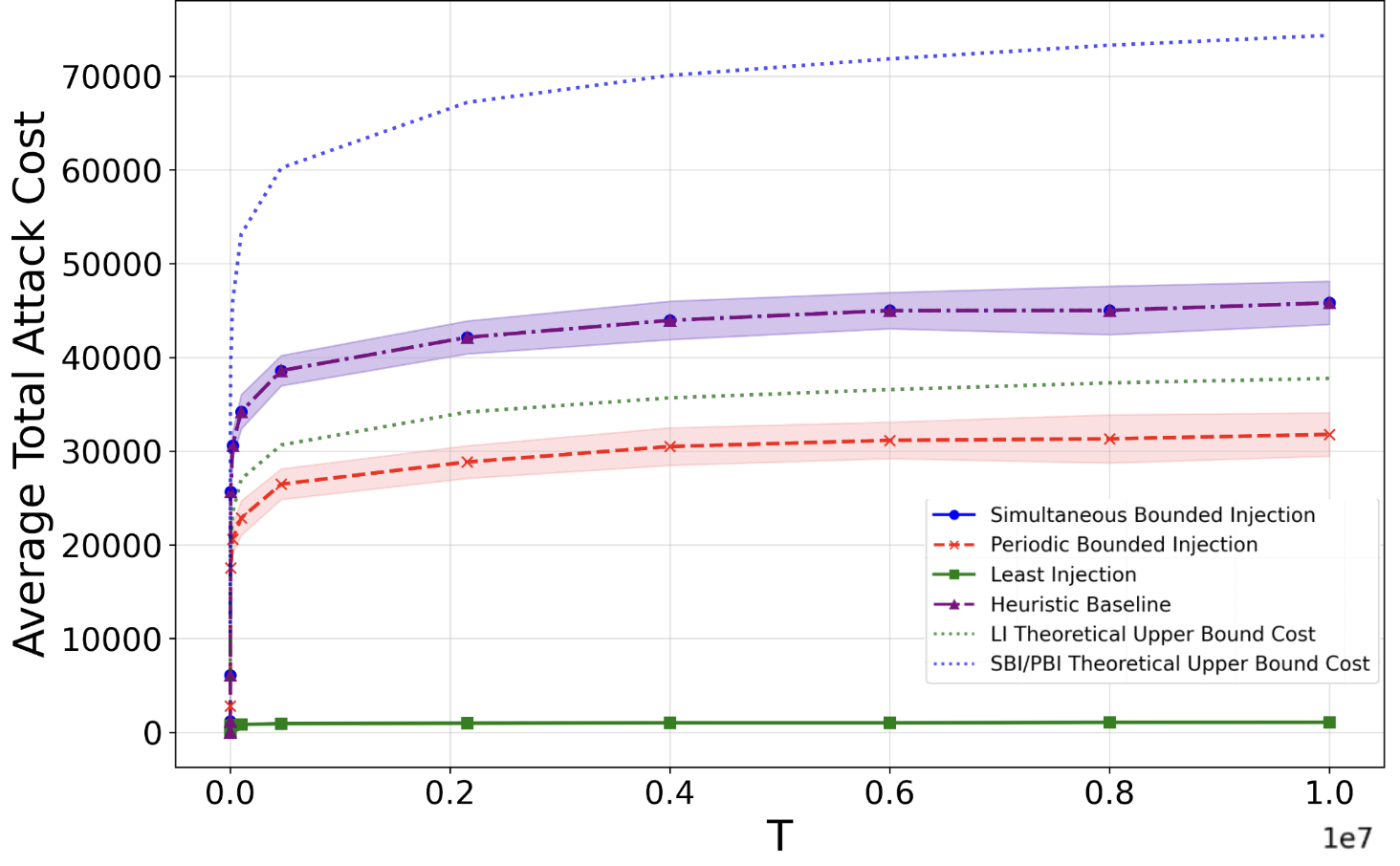}
        \label{fig:1b}
    }
    \hfill
    \subfloat[Selection Ratios]{
        \includegraphics[width=0.31\columnwidth]{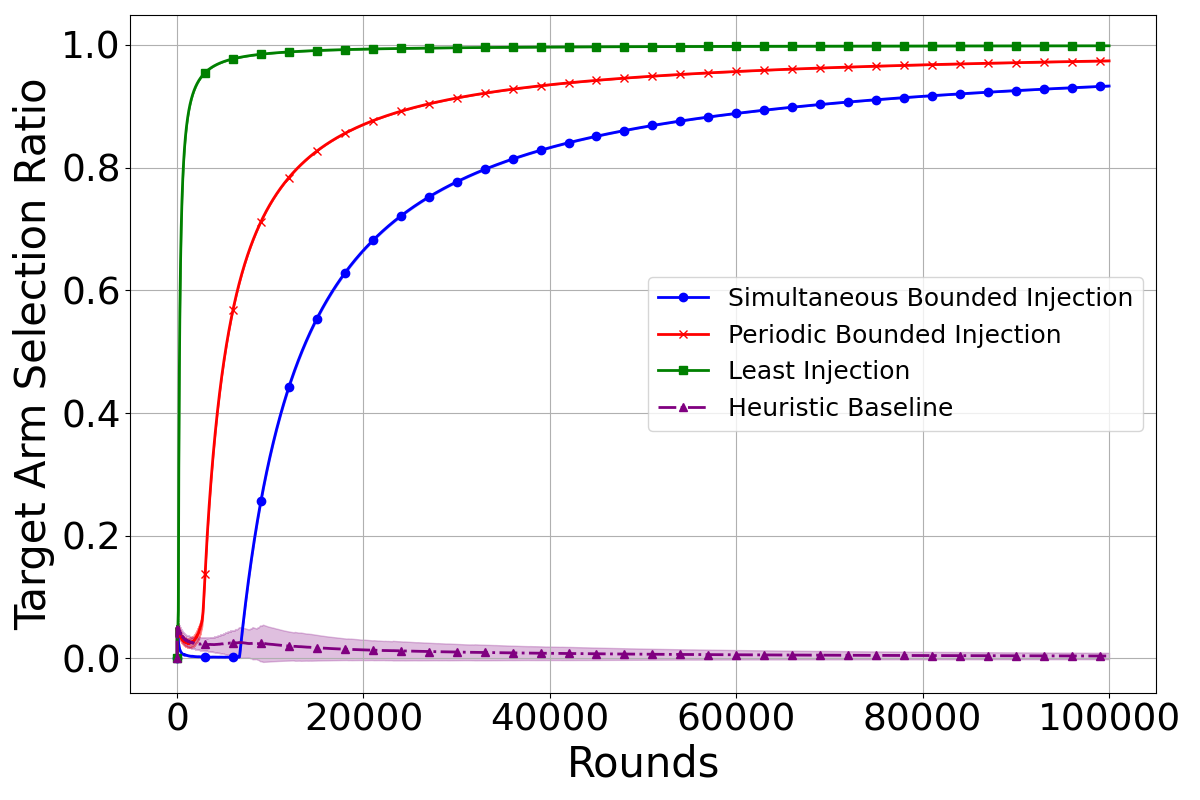}
        \label{fig:1c}
    }
    \caption{Attack cost and target-arm selection under fake data injection for UCB}
    \label{fig:1}
\end{figure*}

We evaluate four attack settings. The first two, Least Injection (LI) and Simultaneous Bounded Injection (SBI), are two variants of Simultaneous Injection (SI): LI corresponds to the special case where SI uses a single fake sample, while SBI corresponds to the bounded setting where SI uses fake rewards lower bounded by $\rl$. The third is Periodic Bounded Injection (PBI). Lastly, the fourth is a heuristic baseline that injects the same number of fake samples as SBI, but instead injects each remaining fake sample at time $t$ with probability proportional to $n_i/(T-t)$.

We evaluate our attack strategies in a realistic setting using the MovieLens 25M dataset~\cite{harper2015movielens}, which reflects the practical motivation behind the Fake Data Injection model. Due to space constraints, we report results for all attacks on a UCB learner; results for the other settings are provided in the appendix. We consider $K = 10$ arms and simulate user interaction traces with stochastic rewards derived from movie rating distributions. The time horizon is set to $T = 100{,}000$, and for PBI the per-round injection limit is set to $f = 5$. \Cref{fig:1a} plots the average attack cost as a function of the confidence parameter $\delta_0$. As expected, increasing $\delta_0$ reduces the number of fake samples required to suppress non-target arms, leading to lower attack costs. \Cref{fig:1b} shows the total attack cost over the time horizon $T$. Across all settings, PBI performs comparably to or better than SBI by spreading bounded injections over time, while LI achieves the lowest overall cost because it uses unbounded injection values. \Cref{fig:1c} shows that SBI, PBI, and LI are all highly effective, while the heuristic baseline fails as the learner does not converge to the target arm. The PBI strategy achieves particularly strong empirical performance while incurring lower attack cost than SBI. The heuristic strategy employs at least the same attack frequency as PBI and incurs higher attack costs, yet it still fails. This result highlights the critical importance of the injection schedule throughout the attack process.
Overall, these results highlight the practical advantage of temporally distributed attacks, as the heuristic baseline fails despite using the same budget and number of fake samples.

\section{Concluding Remarks}\label{sec:remarks}

This work introduces \emph{Fake Data Injection}, a practical threat model for adversarial attacks on stochastic bandits. Unlike prior models that allow per-round, unbounded reward perturbations, our framework reflects realistic constraints, including bounded feedback, limited injection capability, and the attacker's inability to modify genuine user data. Under this model, we develop effective attack strategies that manipulate bandit learners using only sublinear-cost injections. Both theory and experiments show that even sparse, bounded fake interactions can significantly bias stochastic bandit algorithms.
Our analytical tools, such as the Suppression Lemma, further reveal a broader vulnerability pattern in sequential decision-making systems. These results expose a gap between classical robustness guarantees and the practical security of online learning under realistic feedback constraints.

Despite these results, several limitations still remain. We study a passive learner that processes all feedback without defense, whereas robust systems may use filtering or auditing. We also focus on stochastic bandits; extending the framework to contextual bandits or reinforcement learning remains open. In addition, real-world attackers may face detection risks or adaptive filtering, which our current model does not capture. Future work should investigate defenses such as anomaly detection and arm-level auditing, as well as the dynamic interaction between attackers and adaptive learners.

\bibliography{bibliography}

@inproceedings{li2010contextual,
  title={A contextual-bandit approach to personalized news article recommendation},
  author={Li, Lihong and Chu, Wei and Langford, John and Schapire, Robert E},
  booktitle={Proceedings of the 19th international conference on World wide web},
  pages={661--670},
  year={2010}
}

@article{kapoor2019corruption,
  title={Corruption-tolerant bandit learning},
  author={Kapoor, Sayash and Patel, Kumar Kshitij and Kar, Purushottam},
  journal={Machine Learning},
  volume={108},
  number={4},
  pages={687--715},
  year={2019},
  publisher={Springer}
}

@article{chen2016combinatorial,
  title={Combinatorial multi-armed bandit and its extension to probabilistically triggered arms},
  author={Chen, Wei and Wang, Yajun and Yuan, Yang and Wang, Qinshi},
  journal={Journal of Machine Learning Research},
  volume={17},
  number={50},
  pages={1--33},
  year={2016}
}

@article{villar2015multi,
  title={Multi-armed bandit models for the optimal design of clinical trials: benefits and challenges},
  author={Villar, Sof{\'\i}a S and Bowden, Jack and Wason, James},
  journal={Statistical science: a review journal of the Institute of Mathematical Statistics},
  volume={30},
  number={2},
  pages={199},
  year={2015}
}

@inproceedings{li2016contextual,
  title={Contextual combinatorial cascading bandits},
  author={Li, Shuai and Wang, Baoxiang and Zhang, Shengyu and Chen, Wei},
  booktitle={International conference on machine learning},
  pages={1245--1253},
  year={2016},
  organization={PMLR}
}

@article{jun2018adversarial,
  title={Adversarial attacks on stochastic bandits},
  author={Jun, Kwang-Sung and Li, Lihong and Ma, Yuzhe and Zhu, Jerry},
  journal={Advances in neural information processing systems},
  volume={31},
  year={2018}
}

@article{garcelon2020adversarial,
  title={Adversarial attacks on linear contextual bandits},
  author={Garcelon, Evrard and Roziere, Baptiste and Meunier, Laurent and Tarbouriech, Jean and Teytaud, Olivier and Lazaric, Alessandro and Pirotta, Matteo},
  journal={Advances in Neural Information Processing Systems},
  volume={33},
  pages={14362--14373},
  year={2020}
}

@article{ma2023adversarial,
  title={Adversarial attacks on adversarial bandits},
  author={Ma, Yuzhe and Zhou, Zhijin},
  journal={arXiv preprint arXiv:2301.12595},
  year={2023}
}

@article{zuo2023adversarial,
  title={Adversarial attacks on online learning to rank with click feedback},
  author={Zuo, Jinhang and Zhang, Zhiyao and Wang, Zhiyong and Li, Shuai and Hajiesmaili, Mohammad and Wierman, Adam},
  journal={Advances in Neural Information Processing Systems},
  volume={36},
  pages={41675--41692},
  year={2023}
}

@misc{bubeck2012regretanalysisstochasticnonstochastic,
      title={Regret Analysis of Stochastic and Nonstochastic Multi-armed Bandit Problems}, 
      author={Sébastien Bubeck and Nicolò Cesa-Bianchi},
      year={2012},
      eprint={1204.5721},
      archivePrefix={arXiv},
      primaryClass={cs.LG},
      url={https://arxiv.org/abs/1204.5721}, 
}

@article{10.1145/3088510, author = {Agrawal, Shipra and Goyal, Navin}, title = {Near-Optimal Regret Bounds for Thompson Sampling}, year = {2017}, issue_date = {October 2017}, publisher = {Association for Computing Machinery}, address = {New York, NY, USA}, volume = {64}, number = {5}, issn = {0004-5411}, url = {https://doi.org/10.1145/3088510}, doi = {10.1145/3088510}, 
journal = {J. ACM}, month = sep, articleno = {30}, numpages = {24}, keywords = {Multi-armed bandits} }

@inproceedings{liu2019data,
  title={Data poisoning attacks on stochastic bandits},
  author={Liu, Fang and Shroff, Ness},
  booktitle={International Conference on Machine Learning},
  pages={4042--4050},
  year={2019},
  organization={PMLR}
}

@inproceedings{zuo2024near,
  title={Near Optimal Adversarial Attacks on Stochastic Bandits and Defenses with Smoothed Responses},
  author={Zuo, Shiliang},
  booktitle={International Conference on Artificial Intelligence and Statistics},
  pages={2098--2106},
  year={2024},
  organization={PMLR}
}

@inproceedings{NEURIPS2021_be315e7f,
 author = {Xu, Yinglun and Kumar, Bhuvesh and Abernethy, Jacob D},
 booktitle = {Advances in Neural Information Processing Systems},
 editor = {M. Ranzato and A. Beygelzimer and Y. Dauphin and P.S. Liang and J. Wortman Vaughan},
 pages = {22550--22561},
 publisher = {Curran Associates, Inc.},
 title = {Observation-Free Attacks on Stochastic Bandits},
 url = {https://proceedings.neurips.cc/paper_files/paper/2021/file/be315e7f05e9f13629031915fe87ad44-Paper.pdf},
 volume = {34},
 year = {2021}
}

@article{wangadversarial,
  title={Adversarial Attacks on Online Learning to Rank with Stochastic Click Models},
  author={Wang, Zichen and Balasubramanian, Rishab and Yuan, Hui and Wang, Mengdi and Wang, Huazheng and others},
  journal={Transactions on Machine Learning Research},
  year={2024},
}

@inproceedings{wang2022linear,
  title={When Are Linear Stochastic Bandits Attackable?},
  author={Wang, Huazheng and Xu, Haifeng and Wang, Hongning},
  booktitle={International Conference on Machine Learning},
  pages={23254--23273},
  year={2022},
  organization={PMLR}
}

@article{harper2015movielens,
  author    = {F. Maxwell Harper and Joseph A. Konstan},
  title     = {The MovieLens Datasets: History and Context},
  journal   = {ACM Transactions on Interactive Intelligent Systems (TiiS)},
  volume    = {5},
  number    = {4},
  year      = {2015},
  pages     = {1--19},
  doi       = {10.1145/2827872},
  url       = {https://doi.org/10.1145/2827872}
}

@inproceedings{lykouris2018stochastic,
author = {Lykouris, Thodoris and Mirrokni, Vahab and Paes Leme, Renato},
title = {Stochastic bandits robust to adversarial corruptions},
year = {2018},
isbn = {9781450355599},
publisher = {Association for Computing Machinery},
address = {New York, NY, USA},
url = {https://doi.org/10.1145/3188745.3188918},
doi = {10.1145/3188745.3188918},
abstract = {We introduce a new model of stochastic bandits with adversarial corruptions which aims to capture settings where most of the input follows a stochastic pattern but some fraction of it can be adversarially changed to trick the algorithm, e.g., click fraud, fake reviews and email spam. The goal of this model is to encourage the design of bandit algorithms that (i) work well in mixed adversarial and stochastic models, and (ii) whose performance deteriorates gracefully as we move from fully stochastic to fully adversarial models. In our model, the rewards for all arms are initially drawn from a distribution and are then altered by an adaptive adversary. We provide a simple algorithm whose performance gracefully degrades with the total corruption the adversary injected in the data, measured by the sum across rounds of the biggest alteration the adversary made in the data in that round; this total corruption is denoted by C. Our algorithm provides a guarantee that retains the optimal guarantee (up to a logarithmic term) if the input is stochastic and whose performance degrades linearly to the amount of corruption C, while crucially being agnostic to it. We also provide a lower bound showing that this linear degradation is necessary if the algorithm achieves optimal performance in the stochastic setting (the lower bound works even for a known amount of corruption, a special case in which our algorithm achieves optimal performance without the extra logarithm).},
booktitle = {Proceedings of the 50th Annual ACM SIGACT Symposium on Theory of Computing},
pages = {114–122},
numpages = {9},
keywords = {bandits, online learning},
location = {Los Angeles, CA, USA},
series = {STOC 2018}
}

@misc{gupta2019better,
      title={Better Algorithms for Stochastic Bandits with Adversarial Corruptions}, 
      author={Anupam Gupta and Tomer Koren and Kunal Talwar},
      year={2019},
      eprint={1902.08647},
      archivePrefix={arXiv},
      primaryClass={cs.LG},
      url={https://arxiv.org/abs/1902.08647}, 
}

@InProceedings{bogunovic2021stochastic,
  title = 	 { Stochastic Linear Bandits Robust to Adversarial Attacks },
  author =       {Bogunovic, Ilija and Losalka, Arpan and Krause, Andreas and Scarlett, Jonathan},
  booktitle = 	 {Proceedings of The 24th International Conference on Artificial Intelligence and Statistics},
  pages = 	 {991--999},
  year = 	 {2021},
  editor = 	 {Banerjee, Arindam and Fukumizu, Kenji},
  volume = 	 {130},
  series = 	 {Proceedings of Machine Learning Research},
  month = 	 {13--15 Apr},
  publisher =    {PMLR},
  pdf = 	 {http://proceedings.mlr.press/v130/bogunovic21a/bogunovic21a.pdf},
  url = 	 {https://proceedings.mlr.press/v130/bogunovic21a.html},
  abstract = 	 { We consider a stochastic linear bandit problem in which the rewards are not only subject to random noise, but also adversarial attacks subject to a suitable budget $C$ (i.e., an upper bound on the sum of corruption magnitudes across the time horizon). We provide two variants of a Robust Phased Elimination algorithm, one that knows $C$ and one that does not. Both variants are shown to attain near-optimal regret in the non-corrupted case $C = 0$, while incurring additional additive terms respectively having a linear and quadratic dependency on $C$ in general. We present algorithm-independent lower bounds showing that these additive terms are near-optimal. In addition, in a contextual setting, we revisit a setup of diverse contexts, and show that a simple greedy algorithm is provably robust with a near-optimal additive regret term, despite performing no explicit exploration and not knowing $C$. }
}

@article{he2022nearly,
  title={Nearly optimal algorithms for linear contextual bandits with adversarial corruptions},
  author={He, Jiafan and Zhou, Dongruo and Zhang, Tong and Gu, Quanquan},
  journal={Advances in neural information processing systems},
  volume={35},
  pages={34614--34625},
  year={2022}
}

@inproceedings{xu2021simple,
  title={Simple combinatorial algorithms for combinatorial bandits: Corruptions and approximations},
  author={Xu, Haike and Li, Jian},
  booktitle={Uncertainty in Artificial Intelligence},
  pages={1444--1454},
  year={2021},
  organization={PMLR}
}
\bibliographystyle{plainnat}

\appendix

\clearpage
\section*{Appendix}
\phantomsection
\addcontentsline{toc}{section}{Appendix}
\begingroup
\small
\tableofcontents
\endgroup

\section{Proofs}

This section details the proofs of the UCB results.

For notational simplicity, we treat all numbers here as \emph{integers}; this convention does not affect the results.

Suppose that the reward distributions of arms are $\sigma^2$-sub-Gaussian. The following concentration result will be useful throughout our analysis.
Recall that $\beta(N) \triangleq \sqrt{\frac{2\sigma^2}{N}\log \frac{\pi^2 K N^2}{3\delta}}$.
Let
\begin{equation} \label{event-E}
    \mathcal E \triangleq \left\{\forall i,t,|\hat\mu^0_i(t) - \mu_i|<\beta(N_i^0(t))\right\},
\end{equation}
where $\hat{\mu}^0_i(t)$ and $N_i^0(t)$ denote the empirical mean and count of genuine samples of arm $i$ up to round $t$, respectively.

\begin{lemma}[Lemma~1 of~\citet{jun2018adversarial}]
    \label{lemma:concentration-E}
    For $\delta \in (0,1)$, $\mathbb{P}\{\mathcal E\} > 1 - \delta$.
\end{lemma}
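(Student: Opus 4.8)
The plan is to reduce this uniform-over-time statement to a fixed-sample concentration bound plus a union bound, the standard recipe for anytime confidence intervals. First I would remove the adaptivity of the data: for each arm $i$, introduce an i.i.d. sequence of rewards $X_{i,1}, X_{i,2}, \dots$ drawn from arm $i$'s distribution, and couple it to the bandit process so that the $s$-th time arm $i$ is pulled (whether by the learner or via an injected fake sample, as long as those are also drawn consistently) the value $X_{i,s}$ is recorded. Then whenever $N_i(t) = N$, the empirical mean is $\hat\mu_i(t) = \tfrac1N\sum_{s=1}^N X_{i,s} =: \bar X_{i,N}$, a quantity depending only on the first $N$ entries of the tape and not on the (possibly adversarially influenced) selection rule. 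Hence $\mathcal E^c \subseteq \bigcup_{i\in[K]}\bigcup_{N\ge 1}\{\,|\bar X_{i,N} - \mu_i| \ge \beta(N)\,\}$, and it suffices to bound each fixed-$N$ event.

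Next I would apply the sub-Gaussian tail inequality. Since the rewards are $\sigma^2$-sub-Gaussian, $\bar X_{i,N} - \mu_i$ is $(\sigma^2/N)$-sub-Gaussian, so $\mathbb P(|\bar X_{i,N}-\mu_i|\ge \varepsilon)\le 2\exp(-N\varepsilon^2/(2\sigma^2))$. Plugging in $\varepsilon = \beta(N) = \sqrt{\tfrac{2\sigma^2}{N}\log\tfrac{\pi^2 K N^2}{3\delta}}$ makes the exponent collapse to $-\log\tfrac{\pi^2KN^2}{3\delta}$, giving per-event probability $\tfrac{6\delta}{\pi^2 K N^2}$. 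Summing over $i\in[K]$ and $N\ge 1$ and using $\sum_{N\ge1}N^{-2} = \pi^2/6$ yields $\mathbb P(\mathcal E^c) \le K\cdot \tfrac{6\delta}{\pi^2 K}\cdot\tfrac{\pi^2}{6} = \delta$, i.e.\ $\mathbb P(\mathcal E)\ge 1-\delta$; the strict inequality follows because the tail bound carries slack (for instance the $\varepsilon$-ball inequality is never tight for a non-degenerate distribution, or one can note that the events for large $N$ are far from saturating their bounds).

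I expect the only real subtlety — the main obstacle — to be the decoupling step: $N_i(t)$ is a data-dependent, non-stopping quantity, so a naive union bound over rounds $t$, or an uncritical appeal to Hoeffding's inequality for the \emph{observed} samples, is not valid when the attacker and learner interleave their actions arbitrarily. The tape/coupling argument above (equivalently, a maximal inequality or optional-stopping argument for the martingale $\sum_{s\le N}(X_{i,s}-\mu_i)$) is precisely what makes the bound legitimate regardless of that interleaving; everything after that is a routine calculation. Since the statement is exactly Lemma 1 of \cite{jun2018adversarial}, one may alternatively just invoke it, but the short self-contained argument sketched here is worth recording because later lemmas (e.g.\ the Exponential Suppression Lemma) condition on $\mathcal E$.
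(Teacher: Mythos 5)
Your proposal is correct and is essentially the canonical proof of this lemma: the paper itself gives no argument but simply imports Lemma~1 of \citet{jun2018adversarial}, whose proof is exactly your reward-tape reduction, the $(\sigma^2/N)$-sub-Gaussian tail bound with $\varepsilon=\beta(N)$, and the union bound over arms and sample counts using $\sum_{N\ge 1}N^{-2}=\pi^2/6$. The calculation checks out and the decoupling step is indeed the only point requiring care.

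One clarification you should tighten: your parenthetical suggesting the coupling still works when the $s$-th ``pull'' is an injected fake sample ``drawn consistently'' does not apply in this threat model, since fake rewards $r_i^F$ are chosen adversarially rather than drawn from arm $i$'s distribution, and no concentration statement can hold for the learner's post-injection estimate. The event $\mathcal{E}$ must therefore be read (as in \citet{jun2018adversarial}) as concentration of the empirical means formed from genuine stochastic rewards only, which is consistent with how the paper uses it: for the target arm $K$, which is never attacked, and for each non-target arm only at the round just before its first injection. With $\hat{\mu}_i(t)$ interpreted this way, your tape argument applies verbatim and the rest of your proof is fine.
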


\subsection{Proof of the suppression lemma for UCB}

First, we present the proof of~\Cref{lemma:exponential-suppression-ucb}. 

\begin{proof}
    Under event $\mathcal{E}$, we can establish a lower bound on arm $K$'s estimate. For any two rounds $t_2 > t_1$,
    \begin{equation}
        \begin{aligned}
            \hat \mu_K (t_2) \geq \mu_K - \beta (N_K(t_2)) \geq \mu_K - \beta (N_K(t_1)) \geq \hat \mu_K (t_1) - 2\beta(N_K(t_1)),
        \end{aligned}\label{centering}
    \end{equation}
where the second inequality follows from the monotonicity of $\beta(N)$.

Suppose that at round $t_1$, the following equation holds,
\begin{equation}
    \hat{\mu}_i(t_1) \le \hat{\mu}_K(t_1) - 2\beta(N_K(t_1)) - 3\sigma\delta_0. \label{eq:attack-condition}
\end{equation}
which means \[
    \text{UCB}_i(t_1) < \text{UCB}_K(t_1),
\]
so arm $i$ is not selected at round $t_1$.

Now consider any subsequent round $t_2$ with $t_1 \le t_2 < \exp(n_i \delta_0^2)$, where $n_i \triangleq N_i(t_1)$, and assume that arm $i$ has not been selected in any round between $t_1$ and $t_2$. Then $N_i(t_2\!+\!1) = n_i$ and $\hat{\mu}_i(t_2\!+\!1) = \hat{\mu}_i(t_1)$, so the UCB index for arm $i$ at round $t_2 + 1$ is
\begin{align*}
    \text{UCB}_i(t_2 + 1) 
        &= \hat{\mu}_i(t_1) + 3\sigma\sqrt{\tfrac{\log(t_2+1)}{n_i}} \le \hat{\mu}_K(t_1) - 2\beta(N_K(t_1)) - 3\sigma\delta_0 + 3 \sigma \sqrt{\tfrac{\log(t_2+1)}{n_i}} \\
        &\le \hat{\mu}_K(t_1) - 2\beta(N_K(t_1)) \le \hat{\mu}_K(t_2 + 1) \le \text{UCB}_K(t_2 + 1),
\end{align*}
where the third step uses the bound $\sqrt{\frac{\log(t_2+1)}{n_i}} < \delta_0$. This argument shows that the UCB index of arm $i$ remains strictly lower than that of the target arm $K$ for all $t_2 < \exp(n_i \delta_0^2)$. By induction, arm $i$ will not be selected again until at least $\exp(n_i \delta_0^2)$ rounds have passed.
\end{proof}

\subsection{Proof of Simultaneous Injection on UCB}

Here we provide the full version and analysis of~\Cref{thm:SI-UCB}.

\begin{theorem}
    Suppose $T > 2K$ and $\delta < 0.5$. With probability at least $1 - \delta$, \Cref{alg:SI-UCB} forces the UCB algorithm to select the target arm in at least \[
        T - \mathcal{O} \left(
            \sum_{i=1}^{K-1} \frac{\mu_i + \beta (\frac{\log T} {\delta_0^2} ) - r_i^F}{\mu_K - 3\beta(1) - 3\sigma\delta_0 - r_i^F} \cdot \frac{\log T}{\delta_0^2}
        \right)
    \]
    rounds, using a cumulative attack cost of at most \[
        \mathcal O \left(
            \sum_{i=1}^{K-1} \left(\mu_{i} - r_i^F\right) \frac{\Delta_i +4\beta(1) + 3\sigma\delta_0}{\mu_K - 3\beta(1) - 3\sigma\delta_0 - r_i^F} \cdot \frac{\log T}{\delta_0^2}
        \right).
    \]
\end{theorem}

\begin{proof}\label{proof:SI-UCB}
    According to~\Cref{alg:SI-UCB}, after injecting $n_i$ samples with an average value of $\rl$ into arm $i$ at round $t$, the empirical mean at round $t + n_i$ becomes: \begin{align*}
        \hat{\mu}_i(t + n_i) 
        &= \frac{ \hat{\mu}_i(t) N_i(t) + n_i \cdot \rl }{ N_i(t) + n_i } \le \frac{ \hat{\mu}_i(t) N_i(t) + \frac{ \hat{\mu}_i(t) - \hat{\ell}_K(t) }{ \hat{\ell}_K(t) - \rl } N_i(t) \cdot \rl }{ N_i(t) + \frac{ \hat{\mu}_i(t) - \hat{\ell}_K(t) }{ \hat{\ell}_K(t) - \rl } N_i(t) } \\
        &= \frac{ \hat{\mu}_i(t) + \frac{ \hat{\mu}_i(t) - \hat{\ell}_K(t) }{ \hat{\ell}_K(t) - \rl } \rl }{ 1 + \frac{ \hat{\mu}_i(t) - \hat{\ell}_K(t) }{ \hat{\ell}_K(t) - \rl } } = \frac{ \hat{\mu}_i(t) \hat{\ell}_K(t) - \hat{\ell}_K(t) \rl }{ \hat{\mu}_i(t) - \rl } = \hat{\ell}_K(t),
    \end{align*}
    where the last step ensures that after injection, arm $i$'s empirical mean is suppressed below the target threshold.

    Since event $\mathcal{E}$ holds, and by~\Cref{lemma:exponential-suppression-ucb}, arm $i$ will not be selected after the injection. The total number of pulls for arm $i$ is therefore:
    \begin{align*}
        N_i(t) + n_i 
        &\le \frac{\log T}{\delta_0^2} + \frac{ \hat{\mu}_i(t) - \hat{\ell}_K(t) }{ \hat{\ell}_K(t) - \rl } \cdot \frac{\log T}{\delta_0^2} = \frac{ \hat{\mu}_i(t) - \rl }{ \hat{\ell}_K(t) - \rl } \cdot \frac{\log T}{\delta_0^2}  \\
        &\le \frac{ \mu_i + \beta(N_i(t)) - \rl }{ \hat{\mu}_K(t) - 2 \beta(N_K(t)) - 3\sigma\delta_0 - \rl } \cdot \frac{\log T}{\delta_0^2} 
        \le \frac{ 
            \mu_i + \beta (\frac{\log T}{\delta_0^2}) - \rl 
        }{ 
            \mu_K - 3\beta(1) - 3\sigma\delta_0 - \rl
        } \cdot \frac{\log T}{\delta_0^2},
    \end{align*}
    where we used concentration bounds for $\hat{\mu}_i(t)$ and $\hat{\mu}_K(t)$ under event $\mathcal{E}$ and the non-decreasing property of $\beta(\cdot)$.

    The total attack cost can be calculated directly from the injected samples:
    \begin{align*}
        \sum_{i=1}^{K-1} C_i^F(T)
        =& \sum_{i=1}^{K-1} (\mu_i-\rl)n_i \\
        =& \sum_{i=1}^{K-1} (\mu_i-\rl)
        \frac{\hat{\mu}_i(t)-\hat{\ell}_K(t)}{\hat{\ell}_K(t)-\rl}
        N_i(t) \\
        \le& \sum_{i=1}^{K-1} (\mu_i-\rl)
        \frac{\Delta_i+\beta(N_i(t))+3\beta(N_K(t))+3\sigma\delta_0}
        {\mu_K-3\beta(N_K(t))-3\sigma\delta_0-\rl}
        \frac{\log T}{\delta_0^2} \\
        \le& \sum_{i=1}^{K-1} \Big(
            \big(\mu_{i} - \rl\big)
            \frac{\Delta_i + \beta(\frac{\log T}{\delta_0^2}) + 3\beta(1) + 3\sigma\delta_0}{\mu_K - 3\beta(1) - 3\sigma\delta_0 - \rl}
        \Big) \frac{\log T}{\delta_0^2} \\
        \le& \sum_{i=1}^{K-1} \Big(
            \big(\mu_{i} - \rl\big)
            \frac{\Delta_i + 4\beta(1) + 3\sigma \delta_0}{\mu_K - 3\beta(1) - 3\sigma\delta_0 - \rl}
        \Big) \frac{\log T}{\delta_0^2}.
    \end{align*}

    Therefore, both the cumulative attack cost and the number of non-target arm pulls are $\mathcal{O}\left( \tfrac{\log T}{\delta_0^2} \right)$ per arm, and hence sublinear in $T$, completing the proof.
\end{proof}

\subsection{Proofs of Periodic Bounded Injection on UCB}

\begin{proof}[Proof of~\Cref{lemma:PBI-suppression}]
    Suppose event $\mathcal{E}$ holds. We begin by estimating the total number of fake samples needed to demote arm $i$. This quantity is given by
    \[
        n_i = \frac{\hat{\mu}_i(t) - \hat{\ell}_K(t)}{\hat{\ell}_K(t) - \rl} \cdot \frac{\log T}{\delta_0^2},
    \]
    where $\hat{\ell}_K(t)$ is a conservative lower bound on arm $K$'s empirical mean, and $\rl$ is the value of each injected fake sample. Under the batch size constraint $f$, the attack spans $\lceil \frac{n_i}{f} \rceil$ periods, with each period injecting $f$ fake samples.
    
    Our goal is to choose an appropriate delay parameter $R_i$ such that after each injection, arm $i$ is not selected again until the next scheduled injection. Specifically, we require that after the $c$-th batch (for any $c \in \{1, 2, \dots, \lceil \frac{n_i}{f} \rceil\}$), arm $i$ is not selected for at least $R_i$ rounds.
    
    Let $t_i(c) = t + (f + R_i) c$ denote the round before the $(c+1)$-th injection. We examine the UCB index of arm $i$ at time $t_i(c)$:
    \begin{align}
        \label{eq:UCB_i}
        \text{UCB}_i(t_i(c)) &= \hat{\mu}_i(t_i(c)) + 3\sigma \sqrt{ \frac{\log t_i(c)}{N_i(t) + f c} } \nonumber \\
        &= \frac{N_i(t) \hat{\mu}_i(t) + f c \rl}{N_i(t) + f c} + 3\sigma \sqrt{ \frac{\log t_i(c)}{N_i(t) + f c} }.
    \end{align}
    
    To ensure arm $i$ is not selected before round $t_i(c)$, we want its UCB index to be no larger than that of arm $K$. A sufficient condition is
    \begin{align}
        \label{eq:UCB_condition}
        \text{UCB}_i(t_i(c)) \le \hat{\mu}_K(t) - 2 \beta(N_K(t)) \le \text{UCB}_K(t_i(c)),
    \end{align}
    where we use the lower bound $\hat{\mu}_K(t) - 2\beta(N_K(t))$ to conservatively approximate arm $K$'s UCB.
    
    Define $\tilde{\mu}_i(t_i(c))$ as the post-injection empirical mean of arm $i$: \[
        \tilde{\mu}_i(t_i(c)) = \frac{N_i(t) \hat{\mu}_i(t) + f c \rl}{N_i(t) + f c}.
    \]
    Then, the condition in \eqref{eq:UCB_condition} implies that, for any $c$, the delay parameter $R_i(c)$ must satisfy:
    \begin{align}
        \label{eq:R_i_c}
        R_i(c) \le \frac{\exp\left( \left( \frac{ \hat{\mu}_K(t) - 2\beta(N_K(t)) - \tilde{\mu}_i(t_i(c))}{3\sigma} \right)^2 \cdot (N_i(t) + f c) \right) - t}{c} - f.
    \end{align}
    
    Finally, to ensure that this condition holds for every injection period, we define the overall delay parameter as the minimum over all $c$:
    \begin{align}
        \label{eq:R_i_final}
        R_i = \min_{1 \le c \le \left\lceil \frac{n_i}{f} \right\rceil}  \frac{\exp\left( \left( \frac{ \hat{\mu}_K(t) - 2\beta(N_K(t)) - \tilde{\mu}_i(t_i(c))}{3\sigma} \right)^2 \cdot (N_i(t) + f c) \right) - t}{c} - f.
    \end{align}
    
    This choice of $R_i$ ensures that after each batch of $f$ fake samples, arm $i$ will not be pulled again until the next scheduled injection.
\end{proof}

\begin{proof}[Proof of~\Cref{thm:PBI-UCB}]
Consider any non-target arm $i < K$. Under PBI, periodic injections are initiated once $N_i(t) = \frac{\log T}{\delta_0^2}$. The total number of fake samples required, denoted by $n_i$, is identical to that in SI and is chosen such that the empirical mean of arm $i$ is driven below the lower confidence bound of the target arm $K$.

Unlike SI, the PBI algorithm distributes these $n_i$ injections over time in batches of size $f$, separated by delays of length $R_i$. By~\Cref{lemma:PBI-suppression}, after each batch of $f$ injected samples, arm $i$ will not be selected for at least $R_i$ rounds. This establishes a persistent suppression effect: throughout the entire injection process, arm $i$ is not revisited by the learner, and therefore no additional genuine samples of arm $i$ are collected before all $n_i$ fake samples are injected.
We now consider two cases:

Case 1: All $n_i$ fake samples are successfully injected before round $T$. Then the empirical mean of arm $i$ is fully suppressed, and its future selection is prevented. This directly mirrors the SI case analyzed in~\Cref{thm:SI-UCB}, so both the number of non-target arm pulls and cumulative attack cost are bounded as in that theorem.

Case 2: Only a portion of the $n_i$ fake samples are injected by round $T$. Since \Cref{lemma:PBI-suppression} guarantees that arm $i$ is not pulled during this partial injection phase, the number of target-arm pulls remains high. Meanwhile, the cumulative attack cost incurred is lower than in the SBI case, since fewer injections occur.

As a consequence, the two cases both incur no more than
$$
n_i = \frac{\hat \mu_i(t)-\hat \ell_K(t)}{\hat \ell_K(t)-\rl} \cdot \frac{\log T}{\delta_0^2}
$$
fake samples. 

Since no further pulls of arm $i$ occur during injection, the total number of times arm $i$ is selected satisfies
\begin{align*}
    N_i(t) + n_i 
    &\le \frac{\log T}{\delta_0^2}
    + \frac{ \hat{\mu}_i(t)-\hat{\ell}_K(t)}{ \hat{\ell}_K(t)-\rl} \cdot \frac{\log T}{\delta_0^2} \\
    &= \frac{ \hat{\mu}_i(t)-\rl}{ \hat{\ell}_K(t)-\rl} \cdot \frac{\log T}{\delta_0^2} \\
    &\le \frac{ \mu_i+\beta(N_i(t))-\rl}{ \hat{\mu}_K(t)-2\beta(N_K(t))-3\sigma\delta_0-\rl} \cdot \frac{\log T}{\delta_0^2} \\
    &\le \frac{ \mu_i+\beta\!\left(\frac{\log T}{\delta_0^2} \right)-\rl}{ \mu_K-3\beta(1)-3\sigma\delta_0-\rl} \cdot \frac{\log T}{\delta_0^2}.
\end{align*}

The cumulative attack cost is given by
\begin{align*}
    \sum_{i=1}^{K-1} C_i^F(T)
    =& \sum_{i=1}^{K-1}(\mu_i-\rl)n_i \\
    =& \sum_{i=1}^{K-1}(\mu_i-\rl)
    \frac{\hat{\mu}_i(t)-\hat{\ell}_K(t)}{\hat{\ell}_K(t)-\rl}
    \frac{\log T}{\delta_0^2} \\
    \le& \sum_{i=1}^{K-1}
    \left(
    (\mu_i-\rl)
    \frac{\Delta_i+\beta\!\left(\frac{\log T}{\delta_0^2}\right)+3\beta(1)+3\sigma\delta_0}
    {\mu_K-3\beta(1)-3\sigma\delta_0-\rl}
    \right)
    \frac{\log T}{\delta_0^2} \\
    \le& \sum_{i=1}^{K-1}
    \left(
    (\mu_i-\rl)
    \frac{\Delta_i +4\beta(1)+3\sigma\delta_0}
    {\mu_K-3\beta(1)-3\sigma\delta_0-\rl}
    \right)
    \frac{\log T}{\delta_0^2}.
\end{align*}
\end{proof}

\textbf{Additional Discussion on Determining $R_i$.}

We provide the following sufficient condition under which $R_i(c=1)$ is guaranteed to be the minimizer: \[
    \frac{\hat \mu_K(t) - 2\beta(N_K(t)) - \tilde \mu_i (t_i(1))}{3\sigma} > 1.
\]

\begin{proof}
    We simplify the expression for $R_i(c)$ in \eqref{eq:R_i_c} as follows: \[
        R_i(c) = \frac{\exp\big(h(c)g(c)\big) - t}{c} - f,
    \]
    where \[
        h(c) = \left( \frac{\hat \mu_K(t) - 2\beta(N_K(t)) - \tilde \mu_i(t_i(c))}{3\sigma} \right)^2, \quad
        g(c) = N_i(t) + fc.
    \]
    Note that $h(c) > 0$ and is non-decreasing in $c$ (i.e., $h'(c) \ge 0$), while $g(c)$ is clearly increasing in $c$.
    
    We now examine the derivative of $R_i(c)$:
    \begin{align*}
        \frac{\mathrm{d}}{\mathrm{d}c} R_i(c) &= \frac{\mathrm{d}}{\mathrm{d}c} \left( \frac{\exp(h(c)g(c)) - t}{c} \right) \\
        &= \frac{\exp(h(c)g(c))}{c^2} \left[ h(c)g(c) \left( h'(c)g(c) + h(c)g'(c) \right) - 1 \right] + \frac{t}{c^2}.
    \end{align*}
    
    Since $g'(c) = f$ and $g(c) = N_i(t) + fc$, we further bound this as:
    \begin{align*}
        \frac{\mathrm{d}}{\mathrm{d}c} R_i(c)
        &\ge \frac{\exp(h(c)g(c))}{c^2} \left[ h^2(c)g(c)g'(c) - 1 \right] \\
        &= \frac{\exp(h(c)g(c))}{c^2} \left[ h^2(c)f(N_i(t) + fc) - 1 \right] \\
        &\ge \frac{\exp(h(c)g(c))}{c^2} \left[ h^2(1)f(N_i(t) + f) - 1 \right].
    \end{align*}

    Therefore, if $h(1) > 1$, then $h^2(1)f(N_i(t) + f) > 1$, which ensures that the derivative is strictly positive for all $c \ge 1$. This implies that $R_i(c)$ is strictly increasing in $c$, and thus $R_i(1)$ is the minimizer.
\end{proof}

\section{Thompson Sampling Algorithm-specific Analysis}
\label{app:ts-extensions}

\textbf{Thompson Sampling (TS)}.
We consider the Thompson Sampling algorithm specified in \citet{zuo2024near}, and the prototype is the $(\alpha, \psi)$ algorithm of~\citet{10.1145/3088510}. In the first $K$ rounds, the learner pulls each arm once. For rounds $t > K$, a sample $\nu_i$ is drawn independently for each arm $i \in [K]$ from the distribution $\mathcal{N}(\hat{\mu}_i(t), 1 / N_i(t))$, and the learner selects the arm with the largest sampled value $a_t = \arg\max_{i \in [K]} \nu_i$.
In the absence of attacks, both UCB and TS are known to achieve sublinear regret by selecting suboptimal arms only $o(T)$ times.

For the Thompson Sampling extensions, we also use an event $\mathcal F$ that controls the sampled value $\nu_i$ of each arm:
\begin{equation} 
\label{def:event-F}
    \mathcal F \triangleq \left\{\forall  i, t, |\nu_i(t) - \hat \mu_i(t)| < \frac{\gamma(t)}{\sqrt{N_i(t)}}\right\},
\end{equation}
where $\gamma(t) \triangleq \sqrt{2\log \frac{\pi^2 K t^2}{ 3\delta}}$.

\begin{lemma}\label{lemma:concentration-F}
    For $\delta \in (0,1)$, $\mathbb{P}\{\mathcal F\} > 1 - \delta$.
\end{lemma}

\begin{proof}
Since $\nu_i(t)\sim\mathcal{N}\left(\hat{\mu}_i(t),\frac{1}{N_i(t)}\right)$, it follows that
\[
\nu_i(t)-\hat{\mu}_i(t) \sim \mathcal{N}\!\left(0, \frac{1}{N_i(t)}\right).
\]
By the Gaussian tail bound, for any $c > 0$,
\[
\Pr\!\left(\left|\nu_i(t)-\hat{\mu}_i(t)\right| > c\right)
< 2 \exp\!\left(-\frac{c^2}{2 / N_i(t)}\right)
= 2 \exp\!\left(-\frac{N_i(t) c^2}{2}\right).
\]
Setting
\[
c = \frac{\gamma(t)}{\sqrt{N_i(t)}},
\qquad
\gamma(t) = \sqrt{2 \log \frac{\pi^2 K t^2}{3\delta}},
\]
we obtain
\[
\Pr\!\left(\left|\nu_i(t)-\hat{\mu}_i(t)\right| > \frac{\gamma(t)}{\sqrt{N_i(t)}}\right)
< 2 \exp\!\left(-\frac{\gamma(t)^2}{2}\right)
= \frac{6\delta}{\pi^2 K t^2}.
\]
Summing over $t$ yields the claim.
\end{proof}

We collect the Thompson Sampling extensions here. The event $\mathcal F$ in~\eqref{def:event-F} controls the posterior samples around their empirical means; combined with the genuine-sample event $\mathcal E$, it yields a TS analogue of the UCB suppression lemma. We then state SI-TS and PBI-TS in the same order as the main text develops SI and PBI for UCB.

\subsection{Suppression Lemma for Thompson Sampling}

Define
\[
    \Gamma_i(t)
    \triangleq
    \sqrt{2\log \frac{\pi^2K}{3\delta}+4N_i(t)\delta_0^2},
    \qquad
    \kappa_i^{\mathrm{TS}}(t)
    \triangleq
    \Gamma_i(t)\left(\frac{1}{\sqrt{N_i(t)}}+\frac{1}{\sqrt{N_K(t)}}\right),
\]
and
\[
    \hat \ell'_K(t)
    \triangleq
    \hat \mu_K(t) - 2\beta(N_K(t))
    - \kappa_i^{\mathrm{TS}}(t).
\]
The margin $\kappa_i^{\mathrm{TS}}(t)$ is the largest combined posterior-sampling deviation of arms $i$ and $K$ over the suppression window ending at $\exp(N_i(t)\delta_0^2)$.

\begin{lemma}[Exponential suppression under Thompson Sampling]
\label{lemma:exponential-thompson}
For each non-target arm $i \in [K-1]$, if $\hat{\mu}_i(t) \le \hat{\ell}'_K(t)$, then with probability at least $1 - 2\delta$, arm $i$ will not be selected again until at least round $\lfloor \exp(N_i(t) \delta_0^2) \rfloor$.
\end{lemma}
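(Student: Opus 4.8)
The plan is to mirror the induction in the proof of \Cref{lemma:exponential-suppression-ucb}, replacing the deterministic UCB index by the random Thompson sample and controlling that sample through the concentration event $\mathcal{F}$. I would work throughout on the event $\mathcal{E} \cap \mathcal{F}$, which by \Cref{lemma:concentration-E}, \Cref{lemma:concentration-F} and a union bound has probability at least $1 - 2\delta$; this is the source of the $1 - 2\delta$ in the statement. On this event, exactly as in \eqref{centering}, the target arm's estimate stays anchored: since the attack never injects on arm $K$, for any $t_2 > t_1$ we have $\hat\mu_K(t_2) \ge \mu_K - \beta(N_K(t_2)) \ge \hat\mu_K(t_1) - 2\beta(N_K(t_1))$ by monotonicity of $\beta$.

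Next, suppose $\hat\mu_i(t) \le \hat\ell'_K(t)$ holds at round $t_1 = t$, write $n_i := N_i(t) \ge 1$, and argue by induction that arm $i$ is not selected at any round $t_2 + 1$ with $t_1 < t_2 + 1 \le \lfloor \exp(n_i \delta_0^2) \rfloor$. Under the inductive hypothesis that arm $i$ has not been pulled since $t_1$, both $N_i$ and $\hat\mu_i$ are frozen at $n_i$ and $\hat\mu_i(t)$. Using $\mathcal{F}$ to bound $\nu_i(t_2+1)$ from above by $\hat\mu_i(t) + \gamma(t_2+1)$ and $\nu_K(t_2+1)$ from below by $\hat\mu_K(t_2+1) - \gamma(t_2+1)$ (both denominators are at least $1$), combined with the anchoring inequality, showing $\nu_K(t_2+1) > \nu_i(t_2+1)$ reduces to
\[
\hat\mu_i(t) \le \hat\mu_K(t) - 2\beta(N_K(t)) - 2\gamma(t_2+1), \qquad \gamma(s) := \sqrt{2\log(\pi^2 K s^2/(3\delta))}.
\]

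The core computation is then to bound $\gamma(t_2+1)$ over the suppression window. By subadditivity of the square root, $\gamma(t_2+1) = \sqrt{2\log(\pi^2K/(3\delta)) + 4\log(t_2+1)} \le \sqrt{2\log(\pi^2K/(3\delta))} + 2\sqrt{\log(t_2+1)}$, and since $t_2 + 1 \le \exp(n_i\delta_0^2)$ gives $\sqrt{\log(t_2+1)} \le \sqrt{n_i}\,\delta_0$, we get $2\gamma(t_2+1) \le \sqrt{8\log(\pi^2K/(3\delta))} + 4\sqrt{n_i}\,\delta_0$. Hence the displayed inequality is exactly $\hat\mu_i(t) \le \hat\ell'_K(t)$, which is the hypothesis; the base case $t_2 + 1 = t_1 + 1$ is handled identically, so the induction closes and arm $i$ is never the argmax until round $\lfloor \exp(n_i\delta_0^2)\rfloor$. (As in the UCB case, this argument applies only after the initial $K$ rounds, which is automatic since the injection occurs once $N_i(t) = \lceil (\log T)/\delta_0^2\rceil$.)

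I expect the main obstacle to be the careful bookkeeping of concentration slack: one must check that the upward fluctuation of $\nu_i$, the downward fluctuation of $\nu_K$, the empirical-mean deviation of $\hat\mu_K$, and the logarithmic growth of $\gamma(t_2+1)$ assemble to exactly the constants $\sqrt{8\log(\pi^2K/(3\delta))}$ and $4\sqrt{n_i}\delta_0$ baked into $\hat\ell'_K$ --- in particular that the factor of $2$ in front of $\gamma$ is obtained precisely by summing the two one-sided sample deviations (each at most $\gamma(t_2+1)$ since $N_i, N_K \ge 1$), with none of it erroneously absorbed into the $\beta$ term carried over from \eqref{centering}.
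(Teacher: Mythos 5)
Your proposal is correct and follows essentially the same argument as the paper's proof: work on $\mathcal{E}\cap\mathcal{F}$ (union bound giving $1-2\delta$), anchor $\hat\mu_K$ via \eqref{centering}, freeze $\hat\mu_i$ and $N_i$ by induction, and reduce the comparison $\nu_i(t_2+1)<\nu_K(t_2+1)$ to the gap condition $2\gamma(t_2+1)\le \sqrt{8\log(\pi^2K/(3\delta))}+4\sqrt{n_i}\,\delta_0$ for $t_2+1\le\exp(n_i\delta_0^2)$. The only cosmetic difference is that you establish this gap by subadditivity of the square root, whereas the paper uses the equivalent bound $a+b\ge\sqrt{a^2+b^2}$; the substance is identical.
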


\begin{proof}
    Suppose that at round $t_1$, the following inequality holds:
    \begin{equation}
        \hat{\mu}_i(t_1) \le \hat{\mu}_K(t_1) - 2\beta(N_K(t_1)) - \kappa_i^{\mathrm{TS}}(t_1). \label{eq:ts-gap-condition}
    \end{equation}
    Let $n_i \triangleq N_i(t_1)$ and consider any round $t_2$ such that $t_1 < t_2 < \lfloor \exp(n_i \delta_0^2) \rfloor$. Assuming that arm $i$ is not selected from round $t_1$ to $t_2$, then $N_i(t_2+1) = n_i$ and $\hat{\mu}_i(t_2+1) = \hat{\mu}_i(t_1)$. Applying the concentration bounds from \Cref{lemma:concentration-E,lemma:concentration-F}, the sampled value $\nu_i(t_2 + 1)$ for arm $i$ satisfies:
    \begin{align*}
        \nu_i(t_2+1) 
        &< \hat{\mu}_i(t_2+1) + \frac{\gamma(t_2+1)}{\sqrt{n_i}} \\
        &\le \hat{\mu}_K(t_1) - 2\beta(N_K(t_1))
            - \kappa_i^{\mathrm{TS}}(t_1)
            + \frac{\gamma(t_2+1)}{\sqrt{n_i}} \\
        &\le \hat{\mu}_K(t_2+1)
            - \kappa_i^{\mathrm{TS}}(t_1)
            + \frac{\gamma(t_2+1)}{\sqrt{n_i}}.
    \end{align*}
    Since $t_2+1\le \exp(n_i\delta_0^2)$, we have
    $\gamma(t_2+1)\le \Gamma_i(t_1)$. Also, $N_K(t_2+1)\ge N_K(t_1)$.
    Hence
    \[
        \kappa_i^{\mathrm{TS}}(t_1)
        \ge
        \gamma(t_2+1)
        \left(\frac{1}{\sqrt{n_i}}+\frac{1}{\sqrt{N_K(t_2+1)}}\right).
    \]
    Therefore
    \[
        \nu_i(t_2+1)
        \le
        \hat{\mu}_K(t_2+1)
        - \frac{\gamma(t_2+1)}{\sqrt{N_K(t_2+1)}}
        \le \nu_K(t_2+1),
    \]
    where the last inequality uses event $\mathcal F$ for arm $K$.

    This chain of inequalities implies that the sampled value $\nu_i(t_2+1)$ remains lower than $\nu_K(t_2+1)$ for all $t_2 < \exp(n_i \delta_0^2)$, with probability at least $1 - 2\delta$. Therefore, arm $i$ will not be selected again until at least round $\exp(n_i \delta_0^2)$.
\end{proof}

\subsection{Simultaneous Injection on Thompson Sampling}

\begin{algorithm}[ht]
\caption{SI on TS}
\label{alg:SI-TS}

\KwIn{Attack parameter $\delta_0$, target arm $K$, lower bound $\rl$}

\For{$t = 1, 2, \dots$}{
    \For{each non-target arm $i \in [K-1]$}{
        \If{$N_i(t) = \lceil \frac{\log T}{\delta_0^2} \rceil$}{
            Inject $n_i =
            \Big\lceil
                \frac{\hat{\mu}_i(t) - \hat{\ell}'_K(t)}
                     {\hat{\ell}'_K(t) - \rl}
                \cdot
                \Big\lceil \frac{\log T}{\delta_0^2} \Big\rceil
            \Big\rceil$ fake samples $(i, \rl)$\;
        }
    }
}
\end{algorithm}


\begin{theorem}\label{thm:SI-TS}
    Suppose $T > 2K$ and $\delta < 0.5$. For each non-target arm $i$, let $t_i$ be the round at which \Cref{alg:SI-TS} triggers the injection and set 
    \[
    \bar \kappa_i^{\mathrm{TS}} 
    \triangleq 
    \sqrt{2\log \frac{\pi^2 K}{3\delta} + 4\log T} \left(1+{\sqrt{\frac{\delta_0^2}{\log T}}}\right) .
    \] 
    With probability at least $1 - 2\delta$, the modified Simultaneous Injection attack forces Thompson Sampling to select the target arm in at least
    \begin{equation}
        T - \mathcal{O}\left(\sum_{i=1}^{K-1} \frac{\mu_i + \beta ({\log T}/{\delta_0^2}) - \rl}{\mu_K - 3\beta(1) - \bar\kappa_i^{\mathrm{TS}} - \rl} \cdot \frac{\log T}{\delta_0^2}\right)
    \end{equation}
    rounds, using a cumulative attack cost of at most
    \begin{equation}
        \mathcal{O} \left(\sum_{i=1}^{K-1} (\mu_i-\rl)\frac{\Delta_{i} + 4 \beta(1) + \bar\kappa_i^{\mathrm{TS}}}{ \mu_K - 3\beta(1) - \bar\kappa_i^{\mathrm{TS}} - \rl} \cdot \frac{\log T}{\delta_0^2} \right).
    \end{equation}
\end{theorem}

\begin{proof}
    \label{proof:SI-TS}
    According to Algorithm~\ref{alg:SI-TS}, after injecting $n_i$ samples, the empirical mean at round $t + {n}_i$ becomes:
    \begin{align*}
        \hat \mu_i(t+n_i) &= \frac{\hat \mu_i(t) N_i(t) + n_i \rl}{N_i(t) + n_i} \le \frac{\hat \mu_i(t) N_i(t) + \frac{\hat \mu_i (t) - \hat \ell^{'}_K(t)}{\hat \ell^{'}_K(t) - \rl} N_i(t)\rl}{N_i(t) + \frac{\hat \mu_i (t) - \hat \ell^{'}_K(t)}{\hat \ell^{'}_K(t) - \rl} N_i(t)} \\
        &= \frac{\hat \mu_i(t) + \frac{\hat \mu_i (t) - \hat \ell^{'}_K(t)}{\hat \ell^{'}_K(t) - \rl} \rl}{1 + \frac{\hat \mu_i (t) - \hat \ell^{'}_K(t)}{\hat \ell^{'}_K(t) - \rl}} = \frac{\hat \mu_i (t) \hat \ell^{'}_K(t) - \hat \ell^{'}_K(t) \rl}{\hat \mu_i(t) - \rl} = \hat \ell^{'}_K(t),
    \end{align*}
    where the last step ensures that after injection, arm $i$'s empirical mean is suppressed below the target threshold.

    Since events $\mathcal{E}$ and $\mathcal{F}$ hold, and by Lemma~\ref{lemma:exponential-thompson}, arm $i$ will not be selected after round $t + {n}_i$. The total number of pulls for arm $i$ is therefore:   
    
    \begin{align*}
        N_i(t) + n_i &= \frac{\log T}{\delta_0^2}
            + \frac{\hat \mu_{i}(t) - \hat \ell^{'}_K(t)}{\hat\ell^{'}_K(t) - \rl}
            \frac{\log T}{\delta_0^2} 
            = \frac{\hat \mu_{i}(t) - \rl}{\hat\ell^{'}_K(t) - \rl} \frac{\log T}{\delta_0^2} \\
        &\le \frac{\mu_{i} + \beta(N_i(t)) - \rl}{\hat \mu_K(t) - 2\beta(N_K(t)) - \kappa_i^{\mathrm{TS}}(t) - \rl} \frac{\log T}{\delta_0^2} \\
        &\le \frac{
            \mu_{i} + \beta(\frac{\log T}{\delta_0^2}) - \rl
        }{
            \mu_K - 3\beta(1) - \kappa_i^{\mathrm{TS}}(t) - \rl
        } \frac{\log T}{\delta_0^2}\\
        &\le \frac{
            \mu_{i} + \beta(\frac{\log T}{\delta_0^2}) - \rl
        }{
            \mu_K - 3\beta(1) - \bar\kappa_i^{\mathrm{TS}} - \rl
        } \frac{\log T}{\delta_0^2}.
    \end{align*}

    The total attack cost can be calculated directly from the injected samples:
    \begin{align*}
        \sum_{i=1}^{K-1} C_i^F(T)
        =& \sum_{i=1}^{K-1}(\mu_i-\rl)n_i \\
        =& \sum_{i=1}^{K-1}(\mu_i-\rl)
        \frac{\hat \mu_i(t)-\hat \ell^{'}_K(t)}{\hat \ell^{'}_K(t)-\rl}
        N_i(t) \\
        \le& \sum_{i=1}^{K-1}(\mu_i-\rl)
        \frac{\Delta_i+\beta(N_i(t))+3\beta(N_K(t))+\kappa_i^{\mathrm{TS}}(t)}
        {\mu_K-3\beta(N_K(t))-\kappa_i^{\mathrm{TS}}(t)-\rl}
        \frac{\log T}{\delta_0^2} \\
        \le& \sum_{i=1}^{K-1}(\mu_i - \rl)
        \frac{\Delta_i + 4\beta(1) + \bar\kappa_i^{\mathrm{TS}}}{\mu_K - 3\beta(1) - \bar\kappa_i^{\mathrm{TS}} - \rl}
        \frac{\log T}{\delta_0^2}.
    \end{align*}
\end{proof}

Therefore, both the cumulative attack cost and the number of non-target arm pulls are $\mathcal{O}( \tfrac{\log T}{\delta_0^2} )$ per arm, and hence sublinear in $T$, completing the proof.

\subsection{Periodic Bounded Injection on Thompson Sampling}

\begin{algorithm}[t]
\caption{PBI on TS}
\label{alg:PBI-TS}
\small

\KwIn{Attack parameter $\delta_0$, reward bound $[\rl, \ru]$, max batch size $f$}

\For{$t = 1, 2, \dots$}{
    \For{each non-target arm $i \in [K-1]$}{
        
        \If{$N_i(t) = \lceil \tfrac{\log T}{\delta_0^2} \rceil$}{
            
            $n_i \gets
            \Big\lceil
                \frac{\hat{\mu}_i(t) - \hat{\ell}'_K(t)}
                     {\hat{\ell}'_K(t) - \rl}
                \cdot
                \Big\lceil \frac{\log T}{\delta_0^2} \Big\rceil
            \Big\rceil$\;
            
            $R_i \gets
            \displaystyle
            \min_{1 \le c \le \left\lceil \frac{n_i}{f} \right\rceil}
            \left\{
                \frac{1}{c}
                \left(
                    \sqrt{
                        \frac{3\delta}{\pi^2 K}
                        \cdot
                        \exp\!\left(
                            \frac{
                                \left(
                                    \hat{\mu}_K(t)
                                    - 2\beta(N_K(t))
                                    - \tilde{\mu}_i(t_i(c))
                                \right)^2
                            }{
                                2\left(
                                    \frac{1}{\sqrt{N_i(t)+fc}}
                                    + \frac{1}{\sqrt{N_K(t)}}
                                \right)^2
                            }
                        \right)
                    }
                    - t
                \right)
                - f
            \right\}$\;
            $\tau_i \gets t$\;
        }

        \If{$n_i > 0$ and $\tau_i \le t$}{
            Inject $\min(n_i, f)$ fake samples $(i, \rl)$\;
            $n_i \gets n_i - f$;
            $\tau_i \gets \tau_i + f + R_i$\;
        }
    }
}
\end{algorithm}

\begin{lemma}
\label{lemma:PBI-suppression-thompson}
The choice of $R_i$ in \Cref{alg:PBI-TS} ensures that once a batch of $f$ fake data samples is injected into non-target arm $i$, the arm will not be selected again for at least the next $R_i$ rounds.
\end{lemma}

\begin{proof}
We aim to guarantee that arm $i$ is not selected between successive fake sample injections. Let $\tilde{\nu}_i$ and $\tilde{\nu}_K$ denote the Thompson-sampled values of arm $i$ and arm $K$, respectively, after the $c$-th injection. Let $t_i(c) = t + (f + R_i) c$ denote the round before the $c\!+\!1$-th injection.

After injecting $f$ fake samples with value $\rl$ for $c$ periods, the empirical mean of arm $i$ becomes
\[
\tilde{\mu}_i(t_i(c)) = \frac{N_i(t) \hat{\mu}_i(t) + f c \rl}{N_i(t) + f c}.
\]
We want to ensure that arm $i$ is unlikely to be selected before time $t_i(c)$ by ensuring:
\begin{align}
    \label{eq:thompson-domination}
    \tilde{\nu}_i(t_i(c)) 
    &\le \tilde{\mu}_i(t_i(c))
        + \frac{\gamma(t_i(c))}{\sqrt{N_i(t)+fc}} \nonumber \\
    &\le \hat{\mu}_K(t) - 2\beta(N_K(t))
        - \frac{\gamma(t_i(c))}{\sqrt{N_K(t)}} 
    \le \tilde{\nu}_K(t_i(c)),
\end{align}
where $\gamma(t) = \sqrt{2 \log \left( \frac{\pi^2 K t^2}{3\delta} \right)}$ bounds the Thompson Sampling deviation under event $\mathcal{F}$.

Rearranging the middle inequality in \eqref{eq:thompson-domination}, we require:
\begin{align}
\label{eq:thompson-gap}
\gamma(t_i(c))
\left(\frac{1}{\sqrt{N_i(t)+fc}}+\frac{1}{\sqrt{N_K(t)}}\right)
\leq
\hat{\mu}_K(t) - 2\beta(N_K(t)) - \tilde{\mu}_i(t_i(c)).
\end{align}

Solving \eqref{eq:thompson-gap} for $t_i(c)$ gives:
\[
t_i(c) \le \sqrt{ \frac{3\delta}{\pi^2 K} \cdot \exp\left( 
    \frac{\left( \hat{\mu}_K(t) - 2\beta(N_K(t)) - \tilde{\mu}_i(t_i(c)) \right)^2}
    {2\left(\frac{1}{\sqrt{N_i(t)+fc}}+\frac{1}{\sqrt{N_K(t)}}\right)^2}
    \right) }.
\]
Therefore, we define:
\begin{align}
\label{eq:R_i_thompson}
R_i = \min_{1 \le c \le \left\lceil \frac{n_i}{f} \right\rceil}
\frac{\sqrt{ \frac{3\delta}{\pi^2 K} \cdot \exp\left( 
    \frac{ \left( \hat{\mu}_K(t) - 2\beta(N_K(t)) - \tilde{\mu}_i(t_i(c)) \right)^2 }
    {2\left(\frac{1}{\sqrt{N_i(t)+fc}}+\frac{1}{\sqrt{N_K(t)}}\right)^2}
    \right) } - t }{c}  - f.
\end{align}

This ensures that after each batch of $f$ fake samples, arm $i$ is suppressed for at least $R_i$ rounds under events $\mathcal{E}$ and $\mathcal{F}$. Hence, the learner will not select arm $i$ between consecutive injections.
\end{proof}

\begin{theorem}
    \label{thm:PBI-TS}
    Suppose $T > 2K,\delta < 0.5$. For each non-target arm $i$, let $t_i$ be the round at which \Cref{alg:PBI-TS} triggers the injection schedule and set $\bar\kappa_i^{\mathrm{TS}}\triangleq\kappa_i^{\mathrm{TS}}(t_i)$. With probability at least $1 - 2\delta$, the modified PBI forces the TS algorithm to select the target arm in at least \begin{equation}
        T - \mathcal{O}\left(\sum_{i=1}^{K-1} \frac{\mu_i + \beta \left(\log T / \delta_0^2\right) - \rl}{\mu_K - 3\beta(1) -\bar\kappa_i^{\mathrm{TS}} - \rl} \cdot \frac{\log T}{\delta_0^2}\right)
    \end{equation} 
    rounds, using a cumulative attack cost of at most
    \begin{equation}
        \mathcal{O} \left(\sum_{i=1}^{K-1} (\mu_i-\rl)\frac{\Delta_{i} + 4 \beta(1) + \bar\kappa_i^{\mathrm{TS}}}{ \mu_K - 3\beta(1) - \bar\kappa_i^{\mathrm{TS}} - \rl} \cdot \frac{\log T}{\delta_0^2} \right).
    \end{equation}
\end{theorem}

\begin{proof}[Proof of~\Cref{thm:PBI-TS}]
Consider any non-target arm $i<K$, and let $t=t_i$ be the round at which \Cref{alg:PBI-TS} triggers the injection schedule for this arm. Under events $\mathcal E$ and $\mathcal F$, which hold jointly with probability at least $1-2\delta$, the choice of $R_i$ in \Cref{lemma:PBI-suppression-thompson} guarantees that after each batch of fake samples is injected, arm $i$ is not selected before the next scheduled batch. Hence, during the entire injection process, no additional genuine samples of arm $i$ are collected.

The total number of fake samples prescribed by \Cref{alg:PBI-TS} is
\[
    n_i =
    \frac{\hat \mu_i(t)-\hat \ell'_K(t)}
         {\hat \ell'_K(t)-\rl}
    \cdot \frac{\log T}{\delta_0^2},
\]
up to the harmless integer-rounding convention stated at the beginning of the appendix. We now consider two cases.

If all $n_i$ fake samples are injected before round $T$, then the empirical mean of arm $i$ is reduced below $\hat\ell'_K(t)$. By \Cref{lemma:exponential-thompson}, arm $i$ is then suppressed for the remainder of the horizon. If only a portion of the $n_i$ fake samples is injected before round $T$, then \Cref{lemma:PBI-suppression-thompson} still ensures that arm $i$ is not selected during this partial-injection phase, and the incurred cost is no larger than the cost of injecting all $n_i$ samples. Thus, in both cases it is enough to upper bound the contribution of $N_i(t)+n_i$ samples in the learner's history.

Since $N_i(t)=\log T/\delta_0^2$, we have
\begin{align*}
    N_i(t)+n_i
    &= \frac{\log T}{\delta_0^2}
    + \frac{\hat \mu_i(t)-\hat \ell'_K(t)}
           {\hat \ell'_K(t)-\rl}
      \frac{\log T}{\delta_0^2} \\
    &= \frac{\hat \mu_i(t)-\rl}
            {\hat \ell'_K(t)-\rl}
       \frac{\log T}{\delta_0^2} \\
    &\le
    \frac{\mu_i+\beta(N_i(t))-\rl}
         {\hat \mu_K(t)-2\beta(N_K(t))-\kappa_i^{\mathrm{TS}}(t)-\rl}
    \frac{\log T}{\delta_0^2} \\
    &\le
    \frac{\mu_i+\beta(\frac{\log T}{\delta_0^2})-\rl}
         {\mu_K-3\beta(1)-\bar\kappa_i^{\mathrm{TS}}-\rl}
    \frac{\log T}{\delta_0^2}.
\end{align*}
Summing this bound over all non-target arms gives the stated lower bound on the number of target-arm selections.

It remains to bound the cumulative attack cost. Since each injected fake sample for arm $i$ has value $\rl$, the cost contributed by arm $i$ is at most $(\mu_i-\rl)n_i$. Therefore,
\begin{align*}
    \sum_{i=1}^{K-1} C_i^F(T)
    &\le \sum_{i=1}^{K-1}(\mu_i-\rl)n_i \\
    &= \sum_{i=1}^{K-1}(\mu_i-\rl)
    \frac{\hat \mu_i(t_i)-\hat \ell'_K(t_i)}
         {\hat \ell'_K(t_i)-\rl}
    \frac{\log T}{\delta_0^2} \\
    &\le \sum_{i=1}^{K-1}(\mu_i-\rl)
    \frac{\Delta_i+\beta(N_i(t_i))+3\beta(N_K(t_i))+\kappa_i^{\mathrm{TS}}(t_i)}
         {\mu_K-3\beta(N_K(t_i))-\kappa_i^{\mathrm{TS}}(t_i)-\rl}
    \frac{\log T}{\delta_0^2} \\
    &\le \sum_{i=1}^{K-1}(\mu_i-\rl)
    \frac{\Delta_i+4\beta(1)+\bar\kappa_i^{\mathrm{TS}}}
         {\mu_K-3\beta(1)-\bar\kappa_i^{\mathrm{TS}}-\rl}
    \frac{\log T}{\delta_0^2}.
\end{align*}
This proves the theorem.
\end{proof}

\section{$\epsilon$-greedy Algorithm-specific Analysis}

Here we provide the extension of~\Cref{lemma:general-suppression} on $\epsilon$-greedy algorithm.

\begin{corollary}[The $\epsilon$-greedy case]
\label{cor:general-suppression-eps-greedy}
Consider $\epsilon$-greedy with exploration rates $\{\epsilon_s\}_{s=1}^T$. Define
\[
B_i(t,T,\delta)\triangleq \sum_{s=t+1}^T \frac{\epsilon_s}{K}
+\sqrt{2\left(\sum_{s=t+1}^T \frac{\epsilon_s}{K}\right)\log\frac{1}{\delta}}
+\frac{1}{3}\log\frac{1}{\delta}.
\]
Then $\epsilon$-greedy has the post-suppression pull bound $B_i(t,T,\delta)$, and the condition in Eq.~\eqref{eq:general-suppression-condition} becomes
\[
    n_i>
    \frac{U_i(t) - L_K(t)}
    {L_K(t) - a}
    \bigl(N_i^0(t) + B_i(t,T,\delta/K)\bigr).
\]
Under this condition, with probability at least $1-\delta-\delta/K$,
\[
    N_i(T)-N_i(t)\le B_i(t,T,\delta/K).
\]
Applying the condition to all non-target arms suppresses them simultaneously with probability at least $1-2\delta$.
\end{corollary}

\begin{proof}
If $\hat\mu_i(s)<\hat\mu_K(s)$ throughout rounds $t,\dots,T$, then the arm cannot be selected by the greedy part of $\epsilon$-greedy, so it can only be selected during exploration rounds. Let $X_s$ be the indicator that round $s$ is an exploration round and arm $i$ is chosen. Then $\{X_s\}_{s=t+1}^T$ are independent Bernoulli random variables with means $\epsilon_s/K$. Bernstein's inequality therefore gives
\[
    \sum_{s=t+1}^T X_s \le B_i(t,T,\delta/K)
\]
with probability at least $1-\delta/K$, so $B_i(t,T,\delta/K)$ is a valid post-suppression pull bound for this algorithm at confidence level $\delta/K$.

Applying \Cref{lemma:general-suppression} then yields the stated bound. The simultaneous guarantee follows by a union bound over the concentration event and the Bernstein events for all non-target arms.
\end{proof}

\section{General Suppression Framework Extensions}

\subsection{Concrete instantiations of the post-suppression bound}

We next instantiate the general suppression framework for UCB and Thompson Sampling.
For a simple instantiation, suppose that after the suppression step, throughout the remaining horizon the non-target arm has empirical mean at most $\mu_i^F$ and the target arm has empirical mean at least $\mu_K^F$, where
\[
    g_i^F \triangleq \mu_K^F-\mu_i^F>0.
\]
Since our attack injects fake samples only into non-target arms, and these samples can only decrease their empirical means, we may take $\mu_K^F=L_K(t_i)$ and $\mu_i^F = \hat \mu_i(t_i)+2\beta(N_i^0(t_i))$, with $\mu_i^F < \hat \mu_K (t_i) - 2 \beta(N_K(t_i)) = L_K(t_i)$ after suppression.

For UCB, if arm $i$ has count $N_i(s)$ at a later round $s\le T$, then its index is at most
\[
    \mu_i^F + 3\sigma\sqrt{\frac{\log T}{N_i(s)}},
\]
whereas the target arm's UCB index is at least $\mu_K^F+3\sigma\sqrt{\log t_i/T}$. Indeed, for every later round $s\ge t_i$, we have $\log s\ge \log t_i$ and $N_K(s)\le T$, so the target-arm exploration bonus is lower bounded by $3\sigma\sqrt{\log t_i/T}$. Hence arm $i$ cannot be selected once
\[
    N_i(s) \ge \frac{9\sigma^2\log T}{(g_i^F + 3\sigma\sqrt{{\log t_i}/{T}})^2}.
\]
Thus a valid post-suppression pull bound for the general framework is
\[
    B_i^{\mathrm{UCB}}(t,T,\delta/K)
    =
    \left[
    \frac{9\sigma^2\log T}{(\mu_K^F-\mu_i^F+3\sigma\sqrt{{\log t_i}/{T}})^2}
    -N_i(t)
    \right]_+,
\]
where $[x]_+=\max\{x,0\}$.

For Thompson Sampling, by adapting the standard finite-time analysis of Thompson Sampling~\citep{10.1145/3088510}, one obtains that, with probability at least $1-\delta/K$,
\[
    N_i(T)-N_i(t)
    \le
    C\frac{\log(KT/\delta)}{(g_i^F)^2},
\]
for a universal constant $C>0$. Hence we may take
\[
    B_i^{\mathrm{TS}}(t,T,\delta/K)
    =
    C\frac{\log(KT/\delta)}{(\mu_K^F-\mu_i^F)^2}.
\]
This bound follows from the usual TS regret argument: after arm $i$ has been sampled on the order of $\log(KT/\delta)/(g_i^F)^2$ times, its posterior sample exceeds the midpoint $(\mu_i^F+\mu_K^F)/2$ only with probability $O(\delta/(KT))$; the remaining pulls are controlled by the standard anti-concentration step for the target arm.

Substituting either $B_i^{\mathrm{UCB}}$ or $B_i^{\mathrm{TS}}$ into \Cref{thm:general-attack-cost} gives a direct but coarser guarantee. The algorithm-specific UCB and Thompson Sampling attacks are sharper because they suppress the arm below a tailored lower confidence or posterior-sampling threshold and then use the exponential suppression lemmas, which eliminate the extra post-suppression term.

\subsection{Proofs of the General Suppression Framework}

\begin{proof}[Proof of~\Cref{lemma:general-suppression}]
Let $\mathcal G_i (t, T, \delta/K)$ be the event in \Cref{def:suppression-bounded-algorithm} for arm $i$ with confidence parameter $\delta/K$.
Fix an outcome in $\mathcal E\cap \mathcal G_i(t,T,\delta/K)$, and write
\[
    B_i \equiv B_i(t,T,\delta/K),\qquad
    n_i^0 \equiv N_i^0(t),\qquad
    n_K^0 \equiv N_K^0(t).
\]
Immediately after the injection, arm $i$ has $n_i^0+n_i$ samples in the learner's history. We will show that, as long as arm $i$ has been pulled no more than $B_i$ additional times after this injection, its empirical mean stays below the target-arm empirical mean. The post-suppression pull bound in \Cref{def:suppression-bounded-algorithm} then applies.

Let
\[
    m \triangleq N_i(s)-N_i(t)
\]
for some round $s \in \{t,t+1,\dots,T\}$, where $N_i(t)$ denotes the count after the $n_i$ fake samples have been injected. Since no fake samples are injected into arm $i$ after round $t$, all samples added after round $t$ are genuine. On $\mathcal E$, every genuine empirical mean of arm $i$ is upper bounded by $\mu_i+\beta(\cdot)$; in particular, the combined empirical mean of the first $n_i^0+m$ genuine samples of arm $i$ is at most
\[
    \mu_i+\beta(n_i^0+m)
    \le \mu_i+\beta(n_i^0)
    \le \hat{\mu}^0_i(t)+2\beta(n_i^0)
    = U_i(t),
\]
where we used that $\beta(\cdot)$ is nonincreasing and $\mathcal E$ also gives $\mu_i\le \hat{\mu}_i^0(t)+\beta(n_i^0)$. Hence
\begin{equation}
\label{eq:general-suppression-mean}
    \hat{\mu}_i(s)
    \le
    \frac{
        U_i(t)\bigl(n_i^0+m\bigr)+a n_i
    }{
        n_i^0+n_i+m
    }.
\end{equation}

Now suppose $m \le B_i$. Since~\eqref{eq:general-suppression-condition} holds and $a<L_K(t)$, we have
\[
    n_i(L_K(t)-a)
    >
    (U_i(t)-L_K(t))(n_i^0+B_i)
    \ge
    (U_i(t)-L_K(t))(n_i^0+m).
\]
Equivalently,
\[
    U_i(t)(n_i^0+m)+a n_i
    <
    L_K(t)(n_i^0+n_i+m).
\]
Combining this with~\eqref{eq:general-suppression-mean} yields
\[
    \hat{\mu}_i(s)<L_K(t).
\]
It remains to compare the fixed benchmark $L_K(t)$ with the target arm at later rounds. On $\mathcal E$, for every $s\ge t$,
\[
    \hat{\mu}_K(s)
    \ge \mu_K-\beta(N_K^0(s))
    \ge \mu_K-\beta(n_K^0)
    \ge \hat{\mu}_K^0(t)-2\beta(n_K^0)
    =L_K(t),
\]
where again $\beta(\cdot)$ is nonincreasing. Therefore $\hat{\mu}_i(s)<\hat{\mu}_K(s)$ for every round in which $N_i(s)-N_i(t)\le B_i$.

If, toward a contradiction, arm $i$ were selected more than $B_i$ times after the injection, then before the first pull exceeding this number the inequality $\hat{\mu}_i(s)<\hat{\mu}_K(s)$ would have held throughout the trajectory. The event $\mathcal G_i(t,T,\delta/K)$ and the definition of suppression compatibility would then imply $N_i(T)-N_i(t)\le B_i$, a contradiction. Hence $N_i(T)-N_i(t)\le B_i$.
This proves the single-arm claim on $\mathcal E\cap \mathcal G_i(t,T,\delta/K)$, which has probability at least $1-\delta-\delta/K$. Applying the same argument to all non-target arms and using a union bound over $\mathcal E$ and the events $\mathcal G_i(t,T,\delta/K)$ gives
\[
    \mathbb P\left(
    \mathcal E\cap \bigcap_{i=1}^{K-1}\mathcal G_i(t,T,\delta/K)
    \right)
    \ge 1-\delta-\frac{K-1}{K}\delta
    \ge 1-2\delta.
\]
Thus all non-target arms are suppressed simultaneously with probability at least $1-2\delta$.
\end{proof}

\begin{proof}[Proof of~\Cref{thm:general-attack-cost}]
For each non-target arm $i$, let $t_i$ be the first round at which the attacker suppresses arm $i$. Since $a<\mu_K-3\beta(1)$, on $\mathcal E$ we have $a<L_K(t_i)$. Write
\[
    B_i^\delta(t_i)=B_i\left(t_i,T,{\delta}/{K}\right),
    \qquad
    M_i(t_i)=N_i^0(t_i)+B_i^\delta(t_i).
\]
The attacker injects
\[
    n_i(t_i)
    =
    \left\lceil
    \frac{U_i(t_i)-L_K(t_i)}{L_K(t_i)-a} M_i(t_i)
    \right\rceil
\]
fake samples of arm $i$, each with reward value $a$. The ceiling and the definition of $n_i(t_i)$ ensure that the sufficient condition in~\eqref{eq:general-suppression-condition} holds with confidence parameter $\delta/K$.

By~\Cref{lemma:general-suppression}, on the event
\[
    \mathcal E\cap \bigcap_{i=1}^{K-1}\mathcal G_i\left(t_i,T,\frac{\delta}{K}\right),
\]
each non-target arm $i$ is selected at most $B_i^\delta(t_i)$ additional times after its injection. Before the injection, arm $i$ contributes $N_i^0(t_i)$ genuine pulls, and the injection itself contributes $n_i(t_i)$ fake pulls to the learner's history. Thus the total number of non-target-arm pulls in the learner's history is at most
\[
    \sum_{i=1}^{K-1}
    \left[
        M_i(t_i)+n_i(t_i)
    \right],
\]
and, by the choice of $n_i(t_i)$,
\[
    M_i(t_i)+n_i(t_i)
    \le
    \frac{U_i(t_i)-a}{L_K(t_i)-a}M_i(t_i)+1.
\]
This gives the exact-statistic lower bound on the number of target-arm selections. Since $\mathbb P(\mathcal E)\ge 1-\delta$ and each $\mathcal G_i(t_i,T,\delta/K)$ fails with probability at most $\delta/K$, a union bound gives success probability at least $1-2\delta$.

The cost calculation is direct. All fake samples injected into arm $i$ have value $a$, so their total cost is
\[
    |\mu_i-a|\,n_i(t_i)
    \le
    |\mu_i-a|
    \left[
    \frac{U_i(t_i)-L_K(t_i)}{L_K(t_i)-a}M_i(t_i)+1
    \right],
\]
and summing over non-target arms gives the exact-statistic cost bound.

It remains to derive the more explicit form. The assumption $a<\mu_K-3\beta(1)$ implies $a<\mu_K-3\beta(N_K^0(t_i))$ for every trigger time. On $\mathcal E$,
\[
    U_i(t_i)
    =
    \hat\mu_i^0(t_i)+2\beta(N_i^0(t_i))
    \le
    \mu_i+3\beta(N_i^0(t_i)),
\]
and
\[
    L_K(t_i)
    =
    \hat\mu_K^0(t_i)-2\beta(N_K^0(t_i))
    \ge
    \mu_K-3\beta(N_K^0(t_i)).
\]
Therefore,
\[
    \frac{U_i(t_i)-a}{L_K(t_i)-a}
    \le
    \frac{\mu_i+3\beta(N_i^0(t_i))-a}
    {\mu_K-3\beta(N_K^0(t_i))-a} 
    \le 
    \frac{\mu_i+3\beta(1)-a}
    {\mu_K-3\beta(1)-a} ,
\]
and
\[
    \frac{U_i(t_i)-L_K(t_i)}{L_K(t_i)-a}
    \le
    \frac{\Delta_i+6\beta(1)}
    {\mu_K-3\beta(1)-a}.
\]
Substituting the first inequality into the exact-statistic pull bound gives the displayed lower bound on target-arm selections. Substituting the second inequality into the exact-statistic cost bound gives the displayed cost bound. Finally, because the argument holds for any feasible choice of $t_i$, the attacker may choose the trigger times that minimize the resulting upper bound.
\end{proof}

\subsection{Comparison between General and Algorithm-Specific Framework}

The general suppression framework and the algorithm-specific analyses use the same basic idea, but at different levels of resolution. The general framework only assumes that, after injection, the non-target arm remains empirically dominated by the target arm. It then invokes a post-suppression pull bound $B_i(t,T,\delta/K)$ for the learner. Consequently, the sufficient injection size in \Cref{lemma:general-suppression} scales with
\[
    N_i^0(t)+B_i(t,T,\delta/K),
\]
and \Cref{thm:general-attack-cost} carries the same factor in both the target-arm selection bound and the cumulative cost bound.

For UCB, this abstraction can be instantiated by comparing UCB indices after suppression. If the non-target empirical mean is at most $\mu_i^F$ and the target empirical mean is at least $\mu_K^F$, then arm $i$ can still be explored until its confidence radius is small enough. This leads to a bound of order
\[
    B_i^{\mathrm{UCB}}(t,T,\delta/K)
    =
    \left[
    \frac{9\sigma^2\log T}
    {(\mu_K^F-\mu_i^F+3\sigma\sqrt{{\log t_i}/{T}})^2}
    -N_i(t)
    \right]_+.
\]
For Thompson Sampling, the analogous bound follows from the standard finite-time TS analysis~\citep{10.1145/3088510}: once arm $i$ has been sampled on the order of $\log(KT/\delta)/(\mu_K^F-\mu_i^F)^2$ additional times, its posterior sample exceeds the target-arm benchmark only with small probability. Thus one may take
\[
    B_i^{\mathrm{TS}}(t,T,\delta/K)
    =
    C\frac{\log(KT/\delta)}{(\mu_K^F-\mu_i^F)^2}
\]
for a universal constant $C>0$. These instantiations are useful because they show that the abstract theorem applies beyond a single decision rule.

The price of this modularity is looseness. The general theorem treats future selections of the suppressed arm as possible and pays for them through the additional $B_i$ term. By contrast, the UCB- and TS-specific attacks choose a suppression threshold tailored to the learner's exact index or posterior-sampling rule. For UCB, suppressing below $\hat\ell_K(t)=\hat\mu_K(t)-2\beta(N_K(t))-3\sigma\delta_0$ and waiting until $N_i(t)\ge \log T/\delta_0^2$ makes the UCB index of arm $i$ remain below that of the target arm until time $T$. For Thompson Sampling, the extra margin $\kappa_i^{\mathrm{TS}}(t)$ plays the same role for posterior samples. In both cases, the exponential suppression lemmas imply no additional genuine pulls of arm $i$ after the injection, so the algorithm-specific analysis effectively has $B_i=0$ after suppression and yields the sharper $\mathcal O(\log T/\delta_0^2)$ per-arm bounds used in the main theorems.

\section{Additional Discussion}

\subsection{Bounded-Reward Condition}
The condition $\rl \le \hat \mu_K (t) - 2 \beta(N_K(t)) - \eta$ is necessary to ensure that the empirical mean of a non-target arm can be suppressed below that of the target arm. When $\eta < 3 \sigma\delta_0$, a stronger suppression in terms of the confidence width is required, which can be achieved by increasing the number of pulls $N_i(t)$ of the non-target arm. Specifically, since the empirical mean can only be reduced to $\hat{\mu}_K(t) - 2\beta(N_K(t)) - \eta'$ for some $\eta' \in (0,\eta)$, we need
\[
    \Lambda_i(T)
    = \hat{\mu}_i(T) + 3\sigma\sqrt{\tfrac{\log T}{N_i(T)}}
    \le \Lambda_K(T),
\]
which is ensured as long as $3\sigma\sqrt{\frac{\log T}{N_i(T)}} \le \eta'$ on event $\mathcal E$. 
Equivalently, it suffices to choose \[
    N_i(T) \;\ge\; \frac{9\sigma^2}{{\eta'}^2}\log T.
\]
This requirement does not affect the effectiveness of the attack when $T$ is sufficiently large or when $\eta$ is not excessively small.
Overall, this assumption is mild, as it merely requires the true mean of the target arm to exceed the minimum achievable reward by a small safety margin. 

\subsection{Least Injection and Simultaneous Bounded Injection}

The SI template admits two useful special cases. When the attack strength is unconstrained, one feasible choice is
\[
    (n_i, r_i^F)
    =
    \bigl(1, (N_i(t)+1)\hat\ell_K(t)-\hat\mu_i(t)N_i(t)\bigr).
\]
This \emph{Least Injection} (LI) variant shows that a single carefully chosen fake sample can permanently suppress a non-target arm once the suppression threshold is met.

When fake rewards are bounded below by $\rl$, the attacker can instead choose
\[
    (n_i,r_i^F)
    =
    \left(
    \frac{\hat\mu_i(t)-\hat\ell_K(t)}{\hat\ell_K(t)-\rl}N_i(t),
    \rl
    \right),
\]
which is the Simultaneous Bounded Injection (SBI) regime studied in the main text. Compared with LI, SBI uses more fake samples of moderate magnitude. This can be preferable in systems where extreme single-shot feedback would be easy to flag, while still achieving the same long-term suppression effect.

\subsection{Comparison with Corruption-Robust Bandits}

Corruption-robust bandit algorithms are designed to tolerate a limited amount of non-adaptive or history-only corruption in the feedback, and are effective against weaker adversaries that do not strategically target the learner's decision process. This model is fundamentally different from the attack setting considered here. Under a strong adaptive adversary that observes the learner's behavior and aims to enforce targeted manipulation, no algorithm can guarantee robustness once the adversary has sufficient attack budget, as shown in prior work (e.g., \citet[Fact~1]{zuo2024near}). Consequently, guarantees for corruption-robust bandits do not directly extend to targeted attack scenarios.

Our contribution lies within the adversarial attack literature. Existing attack models often rely on assumptions that are unrealistic in practice, such as per-round reward rewriting, modification restricted to the selected arm, or unbounded perturbations. Fake Data Injection offers a more realistic threat model: it preserves adversarial adaptivity while imposing practical constraints on how fake feedback can enter the system. From this perspective, corruption-robust models address a complementary problem rather than the targeted manipulation problem studied in this work.

\subsection{Comparison with the Reward-Manipulation Model}

We clarify the relation between our fake-data-injection model and the standard reward-manipulation model. The main distinction is not that fake data injection is uniformly harder in every dimension. Indeed, it gives the attacker a different form of flexibility: the attacker may inject feedback for arms that are not selected by the learner. In this sense, our model relaxes the selected-arm restriction of reward manipulation. However, this flexibility comes with constraints that are absent from most prior reward-manipulation attacks. Fake samples must be valid logged records with rewards in the feasible range $[\rl,\ru]$; each injected sample is a discrete update that increments the corresponding arm count and advances the learner's internal clock; and, in the periodic bounded setting, the attacker can inject only a limited number of samples at each intervention time. Thus the attacker cannot apply an arbitrary real-valued correction to the currently observed reward, nor can it instantaneously impose an arbitrarily low empirical mean using a single unbounded perturbation.

Our suppression-based algorithms also clarify this relationship. In an unconstrained reward-manipulation setting, the same suppression idea can be implemented more directly: the attacker may simply wait until the learner pulls a non-target arm and then modify the observed reward to the value needed to push that arm below the target-arm lower benchmark. Therefore, the simultaneous-injection strategy can be viewed as exposing a suppression principle that is compatible with, and in some cases easier to realize under, classical reward manipulation. The technical challenge in our setting arises when the fake feedback must be bounded and temporally constrained. In particular, the Periodic Bounded Injection algorithm cannot suppress an arm by one large intervention. It must split the required fake feedback into small batches and schedule them so that the attacked arm remains unattractive throughout the entire injection process. The key difficulty is therefore not merely deciding which arms to attack, but determining how to allocate bounded fake samples over time while preventing the learner from revisiting the non-target arms before suppression is complete.

\subsection{Comparison with Observation-Free Mean-Based Algorithms}

\citet{NEURIPS2021_be315e7f} study observation-free attacks and identify \emph{mean-based algorithms}, whose decisions are functions of the empirical mean reward and pull count of each arm. Our general result uses a different abstraction: suppression-boundedness. The comparison is summarized in \Cref{tab:suppression-bounded-vs-mean-based}.

\begin{table}[htbp]
    \caption{Comparison between our suppression-bounded abstraction and the mean-based algorithm class in observation-free attacks.}
    \label{tab:suppression-bounded-vs-mean-based}
    \centering
    \small
    \begin{tabular}{>{\raggedright\arraybackslash}p{0.18\linewidth}>{\raggedright\arraybackslash}p{0.36\linewidth}>{\raggedright\arraybackslash}p{0.36\linewidth}}
        \toprule
        Aspect & Suppression-bounded (ours) & Mean-based in observation-free attacks~\citep{NEURIPS2021_be315e7f} \\
        \midrule
        Defining property & The pull count of the sub-optimal arm is upper bounded. & The action rule depends on each arm only through its empirical mean and number of past pulls. \\
        Nature of condition & Behavioral and guarantee-based: it constrains what happens after suppression. & Structural and information-based: it constrains what statistics the algorithm may use. \\
        Examples covered & UCB, Thompson Sampling, $\epsilon$-greedy, and any algorithm with a valid post-suppression pull bound. & UCB, Thompson Sampling, $\epsilon$-greedy, and other algorithms whose decision rules use empirical means and counts. \\
        \bottomrule
    \end{tabular}
\end{table}

\section{Additional Experiments}

\subsection{Experimental Setup}

Our attack strategies were evaluated on both synthetic data and real-world user-item interaction data derived from the MovieLens dataset~\cite{harper2015movielens}. We considered a 10-armed stochastic bandit setup for all experiments.

For the synthetic setting, each arm's reward distribution was modeled as a Gaussian with mean in the range $[0,1]$ and fixed standard deviation $\sigma = 1$. We designated the arm with the lowest mean as the target arm to be attacked. Simulations were run for up to $10^6$ rounds using either the UCB algorithm or Thompson Sampling, with our attack strategies applied at predefined intervals.

For the real-world experiments, we used the MovieLens 25M dataset. Ratings were binarized into a sparse user-item interaction matrix, where each entry indicates whether a user interacted with a movie. We then extracted a submatrix comprising the 1000 most active users and the 1000 most interacted-with movies. In each trial, 10 movies were randomly selected as arms, with the movie having the fewest interactions chosen as the target arm. The reward of each arm was defined as the average interaction rate (i.e., the mean of the corresponding binary column). This setup provides a realistic approximation of reward feedback in a recommender system, enabling us to evaluate the attack algorithms in a practical and data-driven context.

We measured the effectiveness of the attacks by tracking the cumulative pull ratio of the target arm over $T$ rounds. To assess both robustness and cost-efficiency, we further analyzed how the total attack cost varies with respect to different values of $\delta_0$ and the time horizon $T$.

\subsection{Attacks on Thompson Sampling}
\begin{figure}[htbp]
    \centering
    \begin{minipage}[t]{0.3\textwidth}
        \centering
        \includegraphics[width=\linewidth]{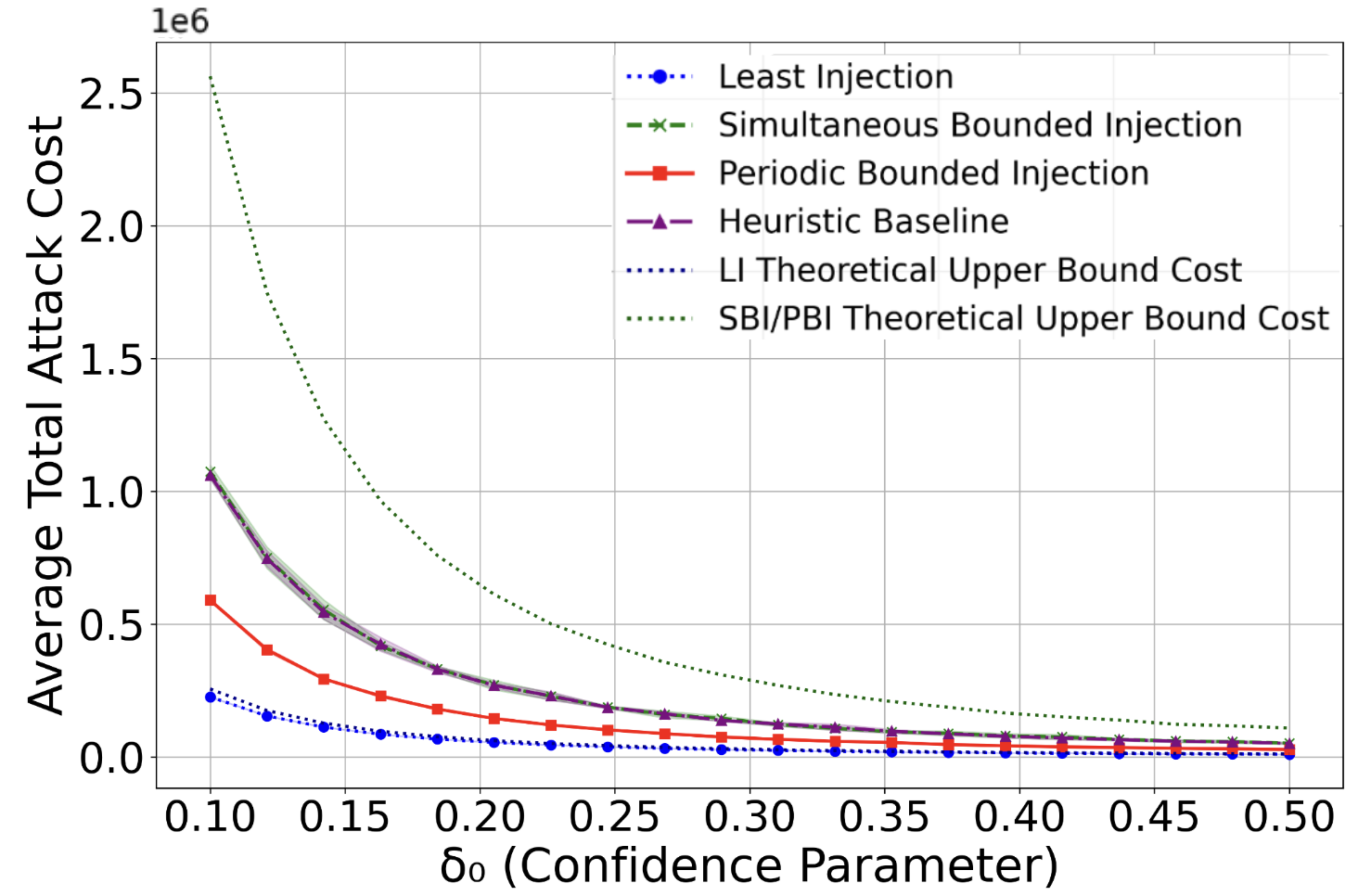}
        \caption{Attack Costs vs $\delta_0$}
        \label{fig:2a}
    \end{minipage}
    \hfill
    \begin{minipage}[t]{0.3\textwidth}
        \centering
        \includegraphics[width=\linewidth]{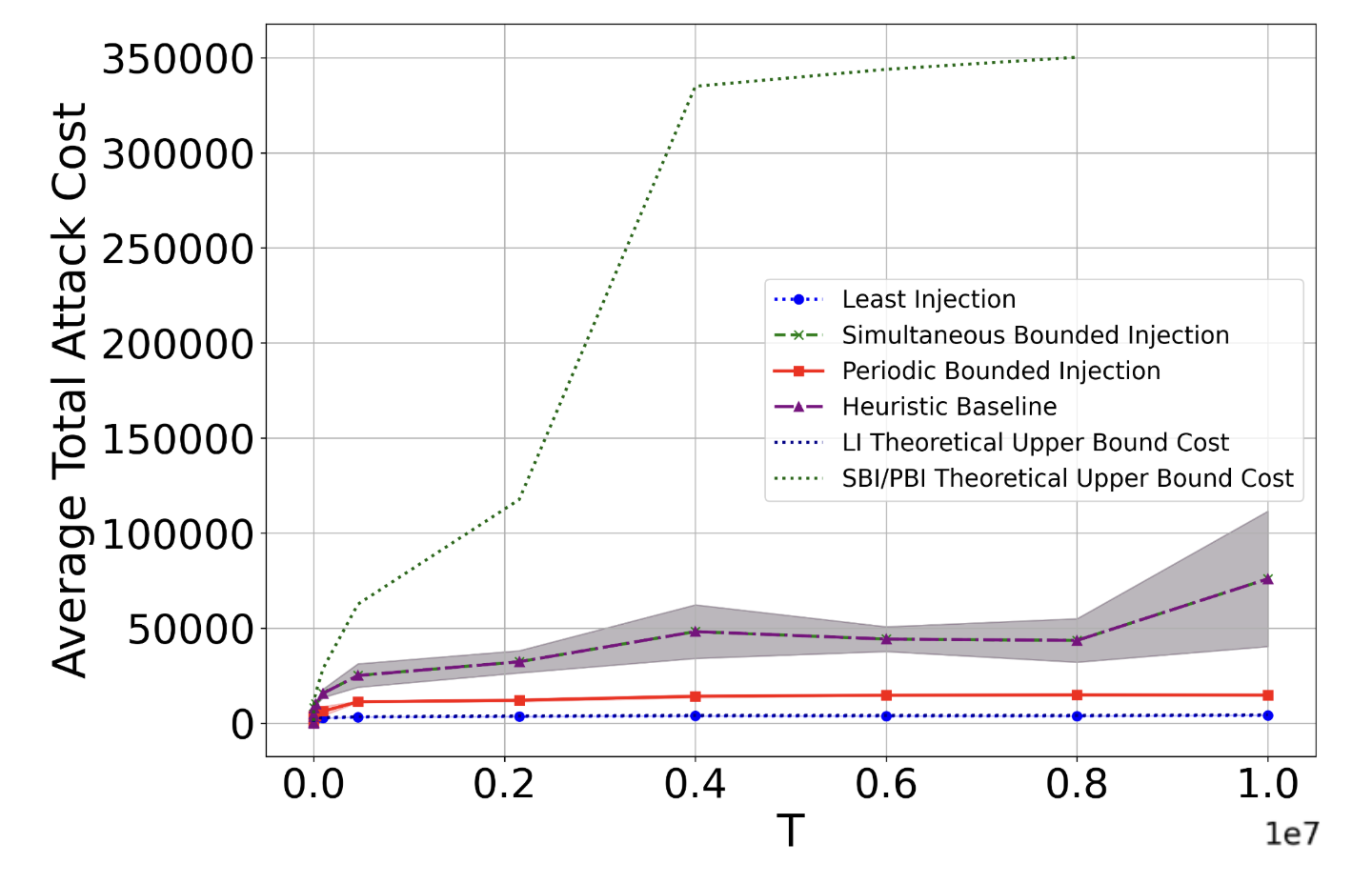}
        \caption{Attack Costs vs $T$}
        \label{fig:2b}
    \end{minipage}
    \hfill
    \begin{minipage}[t]{0.3\textwidth}
        \centering
        \includegraphics[width=\linewidth]{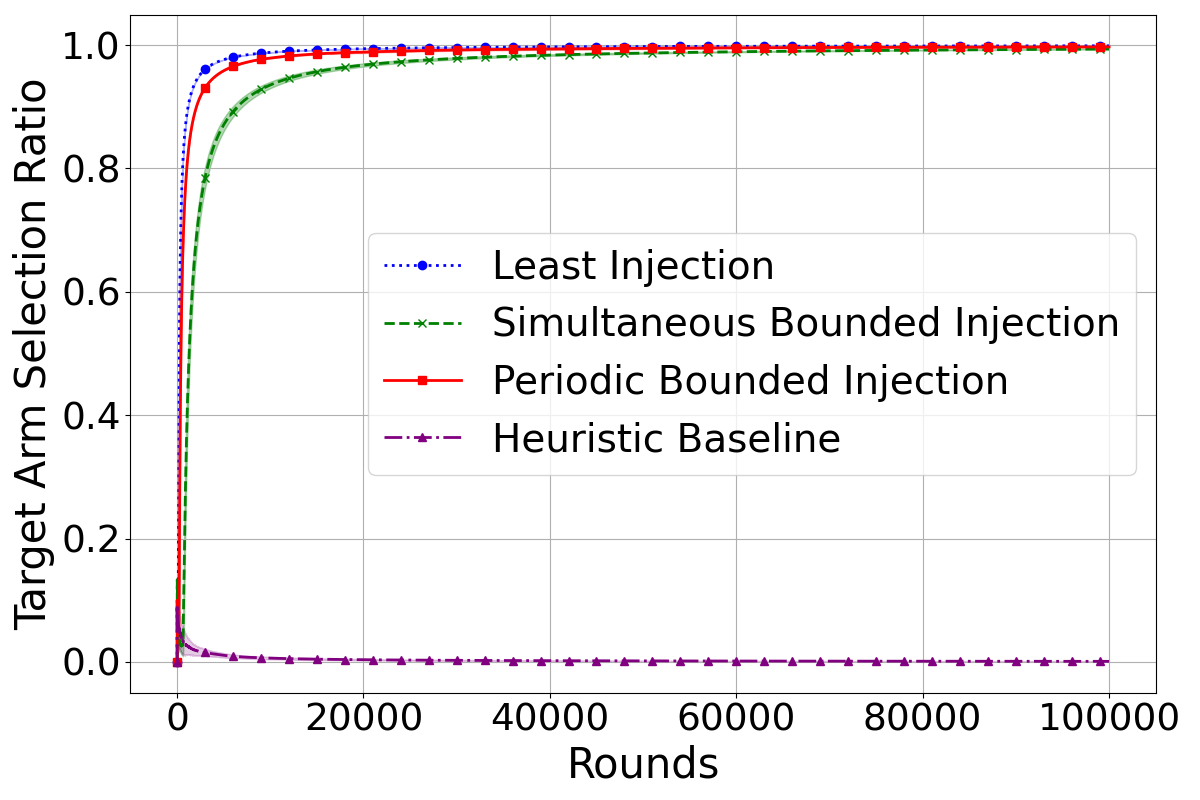}
        \caption{Target Arm Selection Ratios}
        \label{fig:2c}
    \end{minipage}
    \label{fig:2}
\end{figure}

We evaluate our attack strategies in a simulated environment using synthetic data. Specifically, we consider a multi-armed bandit setting with $K = 10$ arms, whose mean rewards follow a descending sequence $\{0.9, 0.85, \ldots, 0.45\}$ to ensure clear arm differentiation. We compare the performance of three attack methods (LI, SBI, and PBI) against a Thompson Sampling learner. The time horizon is set to $T = 100{,}000$ steps, and the per-round injection cap for PBI is fixed at $f = 10$. To understand the trade-off between attack cost and effectiveness, we systematically vary both the confidence parameter $\delta_0$ and the time horizon $T$. 

\Cref{fig:2a} illustrates how the average total attack cost varies with the confidence parameter~$\delta_0$. As~$\delta_0$ increases, the statistical threshold for suppressing non-target arms becomes more lenient, resulting in significantly fewer fake data injections. Consequently, both the SBI and PBI strategies exhibit a marked reduction in cost, while the Single Injection strategy maintains a consistently low cost due to its one-shot nature. \Cref{fig:2b} examines the total attack cost as a function of the time horizon~$T$. The cost of SBI continues to grow with~$T$, whereas PBI flattens out, highlighting its efficiency in long-term scenarios. As expected, the LI strategy incurs the lowest cost overall. \Cref{fig:2c} tracks the target arm selection ratio over time. All three strategies successfully induce the learner to select the target arm in over $95\%$ of rounds after approximately 20{,}000 steps and maintain this dominance throughout the remaining horizon. 
Similar to UCB, the heuristic strategy employs at least the same attack frequency as PBI and incurs higher attack costs, yet it still fails. These results confirm that all proposed strategies are effective against Thompson Sampling, with PBI offering a favorable balance between cost and control.


\end{document}